\pgfplotsset{compat=1.18} 
\newcommand{\eofex}{\hfill$\blacksquare$}
\newcommand*{\PL}{\mathrm{PL}}
\newcommand*{\size}{\mathrm{size}}
\newcommand*{\common}{likeness}
\newcommand*{\sat}{\mathtt{sat}}
\newcommand{\FPol}{\ensuremath{\protect\mathsf{FP}}\xspace}
\newcommand{\FPNPlogwit}{\ensuremath{\FPol^{\NP}[log, wit]}\xspace}
\newcommand{\FPSigmaPlogwit}[1]{\ensuremath{\FPol^{\Sigma^p_{#1}}[log, wit]}\xspace}
\newcommand{\NP}{\mathsf{NP}}
\newcommand{\coNP}{\mathsf{coNP}}
\newcommand\subsetsim{\mathrel{%
  \ooalign{\raise0.2ex\hbox{$\subset$}\cr\hidewidth\raise-0.8ex\hbox{\scalebox{0.9}{$\sim$}}\hidewidth\cr}}}
\author[1]{Tobias Geibinger}
\author[2]{Reijo Jaakkola}
\author[2]{Antti Kuusisto}
\author[1]{Xinghan Liu}
\author[2]{Miikka Vilander}
\affil[1]{TU Wien, Austria}
\affil[2]{Tampere University, Finland}
\date{}
\begin{document}

\setlength\abovedisplayskip{3pt}
\setlength\belowdisplayskip{3pt}
\title{Why this and not that? A Logic-based Framework for Contrastive Explanations}

\theoremstyle{plain}
\newtheorem{theorem}{Theorem}[section]
\newtheorem{lemma}[theorem]{Lemma}
\newtheorem{corollary}[theorem]{Corollary}
\newtheorem{proposition}[theorem]{Proposition}
\newtheorem{fact}[theorem]{Fact}
\theoremstyle{definition}
\newtheorem{definition}[theorem]{Definition}
\newtheorem{remark}[theorem]{Remark}
\newtheorem{example}[theorem]{Example}

\maketitle

\begin{abstract}
\noindent
   We define several canonical problems related to contrastive explanations, each answering a question of the form ``Why P but not Q?''. The problems compute causes for both P and Q, explicitly comparing their differences. We investigate the basic properties of our definitions in the setting of propositional logic. We show, inter alia, that our framework captures a cardinality-minimal version of existing contrastive explanations in the literature. Furthermore, we provide an extensive analysis of the computational complexities of the problems. We also implement the problems for CNF-formulas using answer set programming and present several examples demonstrating how they work in practice.
\end{abstract}


\section{Introduction}

The importance of explanations for decisions made by automatic classifiers has been well-established with the rise of AI methods. In this work, we investigate a category of explanations called \emph{contrastive explanations}, which answer the question ``Why $P$, but not $Q$?'' These types of questions are very common in practical contexts, when an expected outcome was not obtained. It has also been argued \cite{MILLER20191} that even when not explicitly asking for one, people often prefer an explanation in the form of a comparison between the situation as it occurred in reality and a different one that could have happened.

In this work, we use a logic-based framework to formalize several problems where the task is to find contrastive explanations. Related problems have been studied previously, for example, by Darwiche \cite{darwiche23} and Ignatiev et al. \cite{ignatiev20}, where, broadly speaking, the goal is to find a \emph{counterfactual explanation}: ``Had \(H\) been true, it would have been the case that \(Q\)''. In contrast to these works, solutions to our problems explicitly answer both ``Why \(P\)?'' and ``Why not \(Q\)?'' with dedicated output formulas that are required to be structurally similar. Our definitions have been partially inspired by the work of Lipton~\cite{lipton1990}, where he argues that a contrastive explanation should contrast the causes of \(P\) against the absence of corresponding causes of \(Q\). 

Our first problem, which we call the contrastive explanation problem, aims to explain why two seemingly similar entities have differing properties. For example, we might ask why two individuals with similar backgrounds were assigned different credit risk levels. An input to our problem consists of two sets of formulas \(S,S'\) along with two formulas \(\varphi,\psi\) such that \(S \models \varphi \land \neg \psi\) and \(S' \models \neg \varphi \land \psi\). The output of the problem is a minimal size triple \((\theta,\theta',\chi)\) of formulas, such that \(\theta \land \chi\) explains why $S$ implies $\varphi \land \neg \psi$, contrasting with \(\theta' \land \chi\) that explains why $S'$ implies $\neg \varphi \land \psi$. Formally, we require that \(S \vDash \theta \land \chi \vDash \varphi \land \neg \psi\) and \(S' \vDash \theta' \land \chi \vDash \neg \varphi \land \psi\). See Figure \ref{fig:enter-label} for an illustration.
Our problem formulation naturally enforces the similarity of the two explanations, as it encourages shifting content from \(\theta\) and \(\theta'\) into the common formula \(\chi\). 
To see the intuition behind this, consider a situation where we have two animals and we ask why one of them is a cat while the other is a dog. Here one would not insert ``can not fly'' into the differentiating formulae \(\theta\) and \(\theta'\), since it holds for both cats and dogs. On the other hand, it might be necessary to include ``can not fly'' to \(\chi\), since it separates cats and dogs from, say, crows.

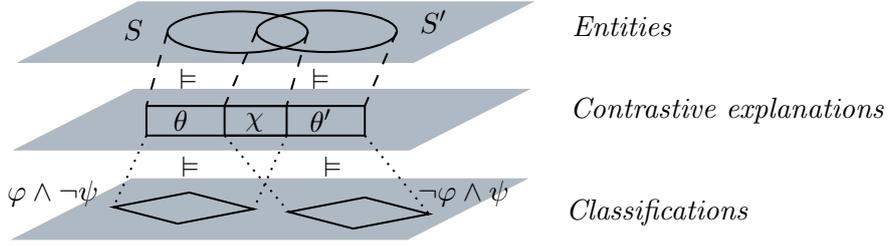
\begin{figure}
    \centering

\tikzset{every picture/.style={line width=0.75pt}} 

\begin{tikzpicture}[x=0.75pt,y=0.75pt,yscale=-1,xscale=1]

\draw  [color={rgb, 255:red, 255; green, 255; blue, 255 }  ,draw opacity=1 ][fill={rgb, 255:red, 156; green, 168; blue, 181 }  ,fill opacity=0.8 ] (239.53,40.87) -- (440,40.87) -- (377.47,72.87) -- (177,72.87) -- cycle ;
\draw  [color={rgb, 255:red, 255; green, 255; blue, 255 }  ,draw opacity=1 ][fill={rgb, 255:red, 156; green, 168; blue, 181 }  ,fill opacity=0.8 ] (237.53,84.87) -- (438,84.87) -- (375.47,116.87) -- (175,116.87) -- cycle ;
\draw  [color={rgb, 255:red, 255; green, 255; blue, 255 }  ,draw opacity=1 ][fill={rgb, 255:red, 156; green, 168; blue, 181 }  ,fill opacity=0.8 ] (236.53,129.87) -- (437,129.87) -- (374.47,161.87) -- (174,161.87) -- cycle ;
\draw   (254.5,56.87) .. controls (254.5,51.1) and (270.17,46.43) .. (289.5,46.43) .. controls (308.83,46.43) and (324.5,51.1) .. (324.5,56.87) .. controls (324.5,62.63) and (308.83,67.3) .. (289.5,67.3) .. controls (270.17,67.3) and (254.5,62.63) .. (254.5,56.87) -- cycle ;
\draw   (299,56.43) .. controls (299,50.67) and (314.67,46) .. (334,46) .. controls (353.33,46) and (369,50.67) .. (369,56.43) .. controls (369,62.2) and (353.33,66.87) .. (334,66.87) .. controls (314.67,66.87) and (299,62.2) .. (299,56.43) -- cycle ;
\draw  [dash pattern={on 4.5pt off 4.5pt}]  (324.5,56.87) -- (314,94) ;
\draw  [dash pattern={on 4.5pt off 4.5pt}]  (299,56.43) -- (283,94) ;
\draw  [dash pattern={on 4.5pt off 4.5pt}]  (369,56.43) -- (353,94) ;
\draw   (227.5,144.87) -- (265.4,137.53) -- (297.5,145.87) -- (259.6,153.21) -- cycle ;
\draw   (315.1,147.34) -- (353,140) -- (385.1,148.34) -- (347.21,155.68) -- cycle ;
\draw   (244,94) -- (314,94) -- (314,108.87) -- (244,108.87) -- cycle ;
\draw   (283,94) -- (353,94) -- (353,108.87) -- (283,108.87) -- cycle ;
\draw  [dash pattern={on 0.84pt off 2.51pt}]  (244,108.87) -- (227.5,144.87) ;
\draw  [dash pattern={on 0.84pt off 2.51pt}]  (283,108.87) -- (315.1,147.34) ;
\draw  [dash pattern={on 0.84pt off 2.51pt}]  (314,108.87) -- (297.5,145.87) ;
\draw  [dash pattern={on 0.84pt off 2.51pt}]  (353,108.87) -- (385.1,148.34) ;
\draw  [dash pattern={on 4.5pt off 4.5pt}]  (254.5,56.87) -- (244,94) ;

\draw (231,49.4) node [anchor=north west][inner sep=0.75pt]    {$S$};
\draw (379,44.4) node [anchor=north west][inner sep=0.75pt]    {$S'$};
\draw (173,130) node [anchor=north west][inner sep=0.75pt]    {$\varphi \land \neg \psi $};
\draw (378,130) node [anchor=north west][inner sep=0.75pt]    {$\neg \varphi \land \psi $};
\draw (256,95) node [anchor=north west][inner sep=0.75pt]    {$\theta $};
\draw (292,96) node [anchor=north west][inner sep=0.75pt]    {$\chi $};
\draw (324,94) node [anchor=north west][inner sep=0.75pt]    {$\theta '$};
\draw (455,47.4) node [anchor=north west][inner sep=0.75pt]    {\textit{Entities}};
\draw (454,88.4) node [anchor=north west][inner sep=0.75pt]    {\textit{Contrastive explanations}};
\draw (452,139.4) node [anchor=north west][inner sep=0.75pt]    {\textit{Classifications}};
\draw (259,74) node [anchor=north west][inner sep=0.75pt]    {$\vDash $};
\draw (325,74) node [anchor=north west][inner sep=0.75pt]    {$\vDash $};
\draw (260,118) node [anchor=north west][inner sep=0.75pt]    {$\vDash $};
\draw (331,118) node [anchor=north west][inner sep=0.75pt]    {$\vDash $};

\end{tikzpicture}
    
    \caption{Illustration of the contrastive explanation problem}
    \vspace{0.2cm}
    \label{fig:enter-label}
\end{figure}

Setting \(S = \{\varphi \land \neg \psi\}\) and \(S' = \{\neg \varphi \land \psi\}\) in the contrastive explanation problem, we obtain as a special case a problem that directly compares all the models of $\varphi \land \neg \psi$ with those of $\neg \varphi \land \psi$, giving a global contrast between the two formulas. Accordingly, we call this the global contrastive problem. We also define an alternative global problem which we call the minimal separator problem. In this problem the goal is to find a single minimal difference that suffices to separate the two input formulas.

We then move on to consider a variant of this problem where the input does not contain \(S'\) and the question we want to answer is ``why does \(S\) satisfy \(\varphi\) but not \(\psi\)?''. We formalize this problem in two distinct ways, both rooted in counterfactual reasoning. In the first problem, which we call the counterfactual contrastive explanation problem, the output is a minimal size triple \((\theta,\theta',\chi)\) such that \(S \models \theta \land \chi \models  \varphi \land \neg \psi\) and \(\theta' \land \chi \models \neg \varphi \land \psi\). The second problem, called the counterfactual difference problem, is otherwise the same except that we require that \(S \equiv \theta \land \chi\), i.e., that \(\theta \land \chi\) defines the current state of affairs \(S\). Roughly speaking, in the first problem we want to find a reason for \(S \models \varphi \land \neg \psi\) and a cause for \(\neg \varphi \land \psi\) which are similar, while in the second problem we want to find a minimal modification to \(S\) that would guarantee that \(\neg \varphi \land \psi\) holds.

We investigate our problems more closely in the setting of propositional logic ($\PL$), assuming for simplicity that the output formulas are in conjunctive normal form. We first show that our definitions indeed find contrasts between the input formulas. For example, for the global contrastive explanation problem, we show that each clause $C$ of the output formula $\theta$ is a \emph{weak contrast} between $\varphi$ and $\psi$, meaning $\varphi$ entails $C$ while $\psi$ does not. We also show a link between our definitions and previous work on contrastive explanations: if the set \(S\) in the input of the counterfactual difference problem defines an assignment, then the output corresponds to a cardinality-minimal CXp 
\cite{ignatiev20}. As a by-product, we show that both of our counterfactual problems output partial assignments up to equivalence when presented with partial assignment inputs. 


We also study the computational complexity of our problems in the case of \(\PL\). We first observe that the contrastive explanation problem, the global contrastive problem and the minimal separator problem are all \(\Sigma_2^p\)-complete. We then show that certain natural variants of our counterfactual problems are also \(\Sigma_2^p\)-complete. 
Finally, we provide a prototypical implementation of our problems using Answer Set Programming, which given the complexity of the underlying problems, is an adequate computation formalism \cite{EiterIK09}. We use this implementation to demonstrate how our problems work in practice via three case studies.




\paragraph{\textbf{Related Work}}

We now present related work on contrastive explanation within the broader context of logic-based explainable AI. It was originally for \emph{local explanation}, namely explaining why a classifier makes a certain decision for a \emph{given} input instance. 
As summarized in several works~\cite{ignatiev20,silva22,darwiche23}, local explanation answers one of the following two questions:
     Q1. (Why): what minimal aspects of an instance guarantee the actual decision?
     Q2. (Why not): what minimal changes to an instance result in a different decision?
An answer to Q1 is nowadays commonly called a \emph{sufficient reason}
\cite{DBLP:conf/ecai/DarwicheH20} or an AXp (short for abductive explanation) \cite{ignatiev2019abduction}. A sufficient reason is a minimal subset of its features' values s.t. changing any other feature values will not change the classification.
In $\PL$, a sufficient reason coincides with a \emph{prime implicant} of the Boolean classifier $\varphi$ which is \emph{locally true} in the given instance.
Therefore the concept was first introduced in the literature under the name PI-explanation \cite{shih2018formal}.
For Barcelo et al. \cite{barcelo2020model}, a sufficient reason itself need not be minimal, and it is called a \emph{minimal/minimum sufficient reason}, if it satisfies the subset-/cardinality-minimality requirement respectively.
We refer to these notions as instances of \emph{direct explanation}, for lack of a better term.

Dually, an answer to Q2 is commonly referred to as either a \emph{contrastive explanation} (CXp) \cite{ignatiev20}, a \emph{counterfactual explanation} \cite{stepin2021survey} or a necessary reason \cite{darwiche23}.
In all cases it is a subset-minimal part of the instance, changing the values of which results in a different classification. 
The duality of AXp and CXp is well-established from a logical viewpoint via minimal hitting sets \cite{ignatiev20}.
Namely, an AXp is a subset-minimal intersection of every CXp of the instance and vice versa.
Similarly to before, a shift from subset-minimality to cardinality-minimality brings us to the notion of \emph{minimum change required} by Barcelo et al. \cite{barcelo2020model}.

We now move from local to global explanation.
Note that an explanation can be called `global' in either \emph{conditional} or \emph{categorical} sense, namely either 1) it is a local explanation \emph{if an input instance satisfies it}, or 2) it explains the whole classifier.
In the former sense, the duality between AXp and CXp extends naturally to \emph{global AXp} and \emph{counterexamples} \cite{silva22}. In propositional logic, they coincide with prime implicants and negated prime implicates.
An example of a global direct explanation in the latter sense is the (shortest) prime DNF expression of Boolean classifiers \cite{explainability1}.
However, here one cannot obtain a corresponding contrastive explanation in terms of the duality, because the contrastive explanations referred to so far are by nature localized to the actual case, viz., they are counterfactual.
One must define a categorically global contrastive explanation in a genuinely global manner, which is a key contribution of our paper.

   

Besides the aforementioned ones, recently Bassan et al. \cite{bassanlocal} define \emph{global sufficient reason} as the set of features which is a sufficient reason for all instances.
Similarly, \emph{global contrastive reason} is ``a subset of features that may cause a misclassification for any possible input''. This notion of a global contrastive explanation differs significantly from ours, which aims to describe essentially all the differences between two classifiers.

\section{Preliminaries}

We will formulate our various explanation problems for an arbitrary logic. For the purposes of this work, a \textbf{logic} $\mathcal{L}$ is a tuple $L = (\mathcal{F}, \vDash, \sat)$, where $\mathcal{F}$ is a set of formulas, $\mathop{\vDash} \subseteq \mathcal{P}(\mathcal{F}) \times \mathcal{F}$ is a binary logical consequence relation and $\sat \subseteq \mathcal{F}$ is a unary satisfiability relation. Instead of $\varphi, \psi \in \mathcal{F}$ and $\{\varphi\} \vDash \psi$ we will write $\varphi, \psi \in \mathcal{L}$ and $\varphi \vDash \psi$ for simplicity. If for $S, S' \subseteq \mathcal{F}$ we have $S \vDash \psi$ for every $\psi \in S'$ and $S \vDash \varphi$ for every $\varphi \in S$, then we say that $S$ and $S'$ are equivalent, denoted $S \equiv S'$. We formulate our definitions for logics $\mathcal{L}$ with classical conjunction $\land$ and classical negation $\neg$ with the usual semantics. These connectives are not necessary but they help to keep our definitions clean and readable.


As the logics that we consider are two-valued, the above framework works best when modeling classifiers for binary classification. To model classifiers with more than two classes, we can assign to each class \(c\) a formula \(\varphi_c\) which is satisfied precisely by the inputs classified as \(c\). We call such a formula a \textbf{class formula}. This approach has been used, for example, in \cite{darwiche23}.


In addition to our very general definitions,
we will also consider the important special case of propositional logic. Let $\tau$ be a set of \textbf{proposition symbols} called a \textbf{vocabulary}. The set $\mathrm{PL}[\tau]$ of formulas of \textbf{propositional logic} over \(\tau\) is generated by the 
grammar
    $\varphi ::= \bot \mid p \mid \neg \varphi \mid \varphi \land \varphi \mid \varphi \lor \varphi$,
where $p \in \tau$. We define $\top := \neg \bot$ as an abbreviation. 

A function $s : \sigma \to \{0,1\}$, where \(\sigma \subseteq \tau\), is called a \textbf{partial} \(\tau\)-\textbf{assignment}. When \(\sigma = \tau\), we say that \(s\) is a \(\tau\)-\textbf{assignment}. We often identify (partial) assignments with conjunctions of literals in the obvious way. The semantics of $\mathrm{PL}[\tau]$ is defined as usual, i.e., for a \(\tau\)-assignment \(s\) we define $s \nvDash \bot$ always, $s \vDash p$ if $s(p) = 1$, $s \vDash \neg \psi$ if $s \nvDash \psi$, $s \vDash \psi \land \theta$ if $s \vDash \psi$ and $s \vDash \theta$, and finally $s \vDash \psi \lor \theta$ if $s \vDash \psi$ or $s \vDash \theta$. Given \(S \subseteq \mathrm{PL}[\tau]\) and a (partial) \(\tau\)-assignment \(s\), we say that \(S\) \textbf{defines} \(s\), if \(S \equiv s\). 
If $S = \{\varphi\}$, we write $\varphi$ defines $s$ rather than $\{\varphi\}$ defines $s$.

Assignments that satisfy a formula are also called its \textbf{models}. For two formulas $\varphi, \psi \in \mathrm{PL}[\tau]$, we use the notation $\varphi \vDash \psi$ for \textbf{logical consequence}, that is $\varphi \vDash \psi$ if for all assignments $s$, $s \vDash \varphi$ implies $s \vDash \psi$. 
We extend this notation also to sets $S, T \subseteq \mathrm{PL}[\tau]$ of formulas, defining $S \vDash T$ if $\bigwedge S \vDash \bigwedge T$.

Formulas of the form $p$ or $\neg p$, where $p$ is a proposition symbol, are called \textbf{literals}. The \textbf{dual} \(\overline{\ell}\) of a literal \(\ell\) is defined as $\overline{p} = \neg p$ and $\overline{\neg p} = p$. A disjunction of literals is called a \textbf{clause}. A formula $\varphi \in \mathrm{PL}[\tau]$ is in \textbf{conjunctive normal form}, or CNF, if $\varphi$ is a conjunction of clauses. We will often denote clauses as sets of literals and CNF-formulas as sets of clauses. For example, if a CNF-formula $\varphi$ has the clause $\neg p \lor q$, we might write $\{\neg p,q\} \in \varphi$. We denote the set of all CNF formulas $\varphi \in \mathrm{PL}[\tau]$ by $\mathrm{CNF}[\tau]$.

Formula size is a key notion in our work. We do not define formula size in the general case as many different definitions could make sense depending on the logic considered. We will, however, consider the size of propositional formulas in practice so we define it here. We define the \textbf{size} of a formula $\varphi \in \mathrm{PL}[\tau]$ simply to be number of occurences of proposition symbols. More formally, we define it recursively as follows: $\size(\bot) = 0$ and $\size(p) = 1$, for a proposition symbol $p$; \(\size(\neg \psi) = \size(\psi)\); $\size(\psi_1 \land \psi_2) = \size(\psi_1 \lor \psi_2) = \size(\psi_1) + \size(\psi_2)$.

\section{Generalized Contrastive Explanation}\label{sec:notions}

In this section we introduce several natural problems that concern contrastive explanations. We start by defining problems focused on explaining differences either between instances (objects) or between properties. After this we consider problems that deal with the problem of explaining why a given instance has a particular property. In our logic-based framework instances are represented as sets of formulas while properties correspond to single formulas.

\subsection{Comparison Explanations}

We begin with a problem, where we have two different instances $S$ and $S'$ at hand, and we want to know why one of them satisfies $\varphi$ while the other satisfies $\psi$. In practice, the two instances could seem very similar to us, leading to the question of key differences that explain their different properties $\varphi$ and $\psi$. Van Bouwel and Weber \cite{VanBouwel2002437} call this question an O-contrast, also cited by Miller \cite{MILLER20191}. 

\begin{definition}\label{def:twoinputproblem}
    The \textbf{contrastive explanation problem} is defined as follows.

    \noindent\textbf{Input:} A tuple $(S, S', \varphi, \psi)$, where $S, S' \subseteq \mathcal{L}$ are finite sets and $\varphi, \psi \in \mathcal{L}$.

    \noindent\textbf{Output:} A triple $(\theta, \theta', \chi)$, where $\theta, \theta', \chi \in \mathcal{L}$ have the following properties.
    \begin{enumerate}
        \item $S \vDash \theta \land \chi \vDash \varphi \land \neg \psi$ and $S' \vDash \theta' \land \chi \vDash \neg \varphi \land \psi$,
        \item $\size(\theta)+\size(\theta') + \size(\chi)$ is minimal,
        \item as a secondary optimization criterion, $\size(\chi)$ is maximal.
    \end{enumerate}%
    If no such triple exists, output $\mathtt{error}$.

\end{definition}%
The output contains the contrast formulas $\theta$ and $\theta'$ as well as a shared context formula $\chi$. \emph{The essential part of the output are the differentiating formulas \(\theta\) and \(\theta'\)}.
The three conditions can be motivated as follows. Condition 1 ensures that \(\theta \land \chi\) and \(\theta' \land \chi\) serve as explanations for why \(S\) and \(S'\) satisfy \(\varphi \land \neg \psi\) and \(\neg \varphi \land \psi\), respectively. Condition 2 requires these explanations to be minimal to avoid including irrelevant information. Condition 3 enforces similarity between the explanations by maximizing the common context formula \(\chi\), as it is desirable to shift content from \(\theta\) and \(\theta'\) into \(\chi\).

In the above definition we are asking ``Why $S \models \varphi \land \neg \psi$ and $S' \models \neg \varphi \land \psi$?'' as opposed to ``Why $S \models \varphi$ and $S' \models \psi$?''. The reason we do this is that, in accordance to Lipton~\cite{lipton1990}, we allow the formulas \(\varphi\) and \(\psi\) to be compatible, but we seek explanations which entail that only one of them is true. We will follow this approach throughout the paper.

\begin{example}\label{example:first-example}
We give a concrete example of Definition \ref{def:twoinputproblem} in the case where \(\mathcal{L}\) is propositional logic. Consider the propositional vocabulary \(\tau = \{p,q,r\}\). Let \(\varphi\) be a formula of \(\PL[\tau]\) which is satisfied by precisely those assignments which map exactly two propositional symbols to 1. Furthermore, let \(\psi\) be a formula of \(\PL[\tau]\) which is satisfied by precisely those assignments which map exactly one propositional symbol to 1. Note that \(\varphi \models \neg \psi\) and \(\psi \models \neg \varphi\), whence \(\varphi \land \neg \psi \equiv \varphi\) and \(\neg \varphi \land \psi \equiv \psi\). Now, let \(S := \{p,q,\neg r\}\) and \(S' := \{p,\neg q,\neg r\}\). The following triple is a possible solution: \(\theta := q , \ \theta' := \neg q \text{ and } \chi := p \land \neg r\). \eofex
\end{example}
Consider the special case of the contrastive explanation problem, where $S = \{\varphi \land \neg \psi\}$ and $S' = \{\neg \varphi \land \psi\}$. Now we are asking for a contrast between all of the models of $\varphi$ and $\psi$, with no limitation to a more specific locality $S$ or $S'$. Thus we are asking ``What is the difference between $\varphi$ and $\psi$?'' We call this case the global contrastive explanation problem and define it next separately.

\begin{definition} \label{def:globaltwoinputproblem}
    The \textbf{global contrastive explanation problem} is defined as follows.

    \noindent\textbf{Input:} A pair $(\varphi, \psi)$, where $\varphi, \psi \in \mathcal{L}$.

    \noindent\textbf{Output:} A triple $(\theta, \theta', \chi)$, where $\theta, \theta', \chi \in \mathcal{L}$ have the following properties.
    \begin{enumerate}
        \item $\theta\land \chi \equiv \varphi \land \neg \psi$ and $\theta'\land \chi \equiv \neg \varphi \land \psi$,
        \item $\size(\theta)+\size(\theta') + \size(\chi)$ is minimal,
        \item as a secondary optimization criterion, $\size(\chi)$ is maximal.
    \end{enumerate}
    %
\end{definition}
\begin{example}\label{example:global-contrastive}
Consider again the formulas \(\varphi\) and \(\psi\) from Example \ref{example:first-example}. Restricting our output formulas to CNF-formulas for readability, the following triple is a solution to the global contrastive explanation problem.
\begin{align*}
    \theta &:= (p \lor r) \land (q \lor r) \land (p \lor q) \\
    \theta' &:= (\neg p \lor \neg r) \land (\neg q \lor \neg r) \land (\neg p \lor \neg q) \\
    \chi &:= (p \lor q \lor r) \land (\neg p \lor \neg q \lor \neg r)
\end{align*}
These formulas tell us the following. First, \(\theta\) says that in every model of \(\varphi \land \neg \psi \equiv \varphi\) at least two propositional symbols are true. Secondly, \(\theta'\) says that in every model of \(\neg \varphi \land \psi \equiv \psi\) at least two propositional symbols are false. Finally, \(\chi\) tells us that in models of \(\varphi \lor \psi\)
one propositional symbol is true and one is false.\eofex
\end{example}
Another notion related to contrastivity is that of a separator. A separator of $\varphi$ from $\psi$ is a property that all models of $\varphi$ and no models of $\psi$ have. A separator can thus also be seen as a different answer to the question ``What is the difference between $\varphi$ and $\psi$?'' The next problem asks for a minimal separator of $\varphi$ from $\psi$.

\begin{definition}
    The \textbf{minimal separator problem} is defined as follows.

    \noindent\textbf{Input:} A pair $(\varphi, \psi)$, where $\varphi, \psi \in \mathcal{L}$.

    \noindent\textbf{Output:} A formula $\theta \in \mathcal{L}$ such that
    \(\varphi \vDash \theta, \ \psi \vDash \neg \theta \text{ and } \size(\theta) \text{ is minimal}\). If no such $\theta$ exists, output $\mathtt{error}$.
\end{definition}
%
The global contrastive explanation problem can be seen as giving ``all'' separators between $\varphi$ and $\psi$, or at least enough to achieve equivalent formulas, while the minimal separator problem only gives one minimal separator.  In terms of the natural language question ``What is the difference between $\varphi$ and $\psi$?'', giving all differences or a single difference could both be considered reasonable answers.

\begin{example}
Consider again the formulas \(\varphi,\psi\) in Example \ref{example:first-example}. Clearly \(\varphi \models \neg \psi\). The formula \((p \lor r) \land (q \lor r) \land (p \lor q)\) is a minimal separator between \(\varphi\) and \(\psi\).\eofex
\end{example}
\subsection{Counterfactual Explanations}

We have seen above how the contrastive explanation problem answers the question ``Why does $S$ satisfy $\varphi$ while $S'$ satisfies $\psi$?'' We now turn our attention to the case, where we have only one instance $S$ at hand and we are asking about the classification of that instance. The question here is ``Why does $S$ satisfy $\varphi$ and not $\psi$?''. Van Bouwel and Weber \cite{VanBouwel2002437} call this question a P-contrast, also cited by Miller \cite{MILLER20191}.
We define two problems that modify the contrastive explanation problem in different ways to answer this question.

Our first problem is a straightforward modification of Definition \ref{def:twoinputproblem}, where we simply leave $S'$ to be existentially quantified. 
The condition $S' \vDash \theta' \land \chi \vDash \neg \varphi \land \psi$ is then reduced to the form $\theta' \land \chi \vDash \neg \varphi \land \psi$ since $S'$ itself is not actually needed for the output. The definition is as follows.


\begin{definition}\label{def:problem:one_input_consequence}
    The \textbf{counterfactual contrastive explanation problem} is defined as follows.
    
    \noindent \noindent\textbf{Input:} A tuple $(S, \varphi, \psi)$, where $S \subseteq \mathcal{L}$ is a finite set and $\varphi, \psi \in \mathcal{L}$.

    \noindent \noindent\textbf{Output:} A triple $(\theta, \theta', \chi)$, where $\theta, \theta', \chi \in \mathcal{L}$ have the following properties.
    \begin{enumerate}
        \item $S \vDash \theta\land\chi \vDash \varphi \land \neg \psi$ and $\theta'\land\chi \vDash \neg \varphi \land \psi,$ 
        \item $\theta' \land \chi$ is satisfiable iff $S$ is satisfiable,
        \item $\size(\theta)+\size(\theta') + \size(\chi)$ is minimal,
        \item as a secondary optimization criterion, $\size(\chi)$ is maximal.
    \end{enumerate}
    If no such triple exists, output $\mathtt{error}$.
\end{definition}
The above definition places emphasis on minimal reasons for why instances satisfy properties. Another way to think about contrastive explanations is to consider minimal changes required to the input instance in order to achieve the desired outcome. This can be seen in the literature in the concept of CXp \cite{ignatiev20}. It turns out that minimal reasons for satisfaction and minimal changes to achieve satisfaction do not always coincide. This difference between reasons, or why-questions, and changes, or what-is-the-difference-questions motivates our next definition. The intuitive question here is ``What is the difference between $S$ that satisfies $\varphi$ and the closest $S'$ that instead satisfies $\psi$?'' 

\begin{definition}\label{def:problem:one_input_equivalence}
    The \textbf{counterfactual difference problem} is defined as follows.
    
    \noindent\textbf{Input:} A tuple $(S, \varphi, \psi)$, where $S \subseteq \mathcal{L}$ is a finite set and $\varphi, \psi \in \mathcal{L}$.

    \noindent\textbf{Output:} A triple $(\theta, \theta', \chi)$, where $\theta, \theta', \chi \in \mathcal{L}$ have the following properties.
    \begin{enumerate}
        \item $S \equiv \theta\land\chi \vDash \varphi \land \neg \psi$ and $\theta'\land\chi \vDash \neg \varphi \land \psi,$ 
        \item $\theta' \land \chi$ is satisfiable iff $S$ is satisfiable,
        \item $\size(\theta)+\size(\theta') + \size(\chi)$ is minimal,
        \item as a secondary optimization criterion, $\size(\chi)$ is maximal.
    \end{enumerate}
    If no such triple exists, output $\mathtt{error}$.
\end{definition}
\begin{remark}
    Note that for both Definitions \ref{def:problem:one_input_consequence} and \ref{def:problem:one_input_equivalence}, if $\psi \vDash \varphi$ and \(S\) is satisfiable, then the output is $\mathtt{error}$. Here the models of $\psi$ are included in those of $\varphi$ and thus $\neg \varphi \land \psi$ is a contradiction. The input can be repaired by replacing $\varphi$ with $\varphi \land \neg \psi$, since \(\neg (\varphi \land \neg \psi) \land \psi \equiv \psi\). This makes the two input formulas separate and preserves the original question: ``Why does $S$ satisfy $\varphi$ but not $\psi$?''
\end{remark}
%


%
\begin{example}\label{example:sea-bird-example}
    Assume we have two propositional classifiers, $\varphi$ and $\psi$, trained on some data about seabirds. The classifier $\varphi = \mathtt{beak\_pouch}$ classifies a bird as a pelican if the bird has a distinctive pouch on the underside of its beak. The classifier $\psi = \neg \mathtt{beak\_pouch} \land \mathtt{small} \land ((\mathtt{white\_body} \land \mathtt{webbed\_feet}) \lor \mathtt{grey\_wing})$ classifies a bird as a seagull if it has no beak pouch, is less than 1 meter in size and either has white plumage on its body and webbed feet, or a grey wing. Note that these are not complete descriptions of these types of birds but rather classification criteria that could have been extracted from a dataset. 

    Let 
    $S = \{\mathtt{beak\_pouch},\neg \mathtt{small},\mathtt{white\_body},\mathtt{webbed\_feet},\neg \mathtt{grey\_wing}\}$
    be a description of a bird. We want to know why this bird was classified as a pelican and not as a seagull. Starting with Definition \ref{def:problem:one_input_equivalence}, the output in this case is $\theta = \mathtt{beak\_pouch}$, $\theta' =\mathtt{small}$, $\chi = (\mathtt{\neg beak\_pouch} \lor \mathtt{\neg small}) \land \mathtt{white\_body} \land \mathtt{webbed\_feet} \land \neg \mathtt{grey\_wing}$. 
    We can read the solution as ``This bird has a beak pouch and is large, so it's a pelican. If it had no beak pouch and was small, it would instead be a seagull.''
    Note how all attributes of the bird are listed in the formula $\theta \land \chi$ even though \texttt{beak\_pouch} suffices to classify the bird as a pelican.

    The output of Definition \ref{def:problem:one_input_consequence} is $\theta = \mathtt{beak\_pouch}$, $\theta' = \neg \mathtt{beak\_pouch} \land \mathtt{small} \land \mathtt{grey\_wing}$, $\chi = \top$. This would be read as ``This bird has a beak pouch so it's a pelican. If it had no beak pouch, was small and had a grey wing, it would instead be a seagull.'' This time only the beak pouch is listed in $\theta \land \chi$ as it suffices to classify the bird as a pelican.
    
    For another difference between the definitions, note that out of the two options provided by the classifier $\psi$, Definition \ref{def:problem:one_input_consequence} has chosen the one with the grey wing. Another option in the search space would have been $\theta = \mathtt{beak\_pouch}$, $\theta' = \neg \mathtt{beak\_pouch} \land \mathtt{small}$, $\chi = \mathtt{white\_body} \land \mathtt{webbed\_feet}$. Out of these two, the grey wing option was chosen because the reasons $\theta \land \chi$ and $\theta' \land \chi$ given for why the bird was a pelican or a seagull were shorter in the first option. This illustrates the fact that Definition \ref{def:problem:one_input_consequence} is not concerned with minimal differences between the input and the counterfactual case, but rather minimal differences between minimal reasons for the classifications of the input and the counterfactual case.\eofex
\end{example}
\section{The Case of Propositional Logic}

In this section, we investigate our problems more closely in the setting of propositional logic. We show that our problems indeed find contrasts between the inputs in a semantic sense. We also establish a formal link between our definitions and existing work on contrastive explanations. We conclude with a study of the computational complexity of our problems.


\subsection{Contrasts and Likenesses}\label{subsec:proplogicproperties}

We start by defining some notions that compare two formulas $\varphi$ and $\psi$. 

\begin{definition}
    Let $\varphi, \psi, \theta$ be formulas of a logic \(\mathcal{L}\).
    \begin{enumerate}
        \item $\theta$ is a \textbf{weak $(\varphi, \psi)$-contrast} if $\varphi \land \neg \psi \vDash \theta$ and $\neg \varphi \land \psi \nvDash \theta$.
        \item $\theta$ is a \textbf{strong $(\varphi, \psi)$-contrast} if $\varphi \land \neg \psi \vDash \theta$ and $\neg \varphi \land \psi \vDash \neg \theta$.
        \item $\theta$ is a \textbf{$(\varphi, \psi)$-\common} if $\varphi \land \neg \psi \vDash \theta$ and $\neg \varphi \land \psi \vDash \theta$.
    \end{enumerate}
\end{definition}
Strong contrasts are properties that all models of $\varphi$ (or more precisely, $\varphi \land \neg \psi$), and none of the models of $\psi$, have. Weak contrasts, on the other hand, are properties that all models of, say, $\varphi$ have, but not all models of $\psi$ do. Likenesses are properties that models of both formulas have.

For global contrastive explanations, the output formulas correspond to contrasts and likenesses in a nice way. 
Let $(\theta, \theta', \chi)$ be an output of the global contrastive explanation problem (Definition \ref{def:globaltwoinputproblem}). We have $ \theta \wedge \chi \wedge \theta' \equiv \varphi \wedge \neg \psi \wedge \neg \varphi \wedge \psi \to \bot $, which means $\theta \wedge \neg \varphi \wedge \psi \to \bot$, namely $\neg \varphi \wedge \psi \to \neg \theta$, i.e. $\theta$ is always a strong $(\varphi, \psi)$-contrast. By symmetry $\theta'$ is a strong $(\psi, \varphi)$-contrast. It is also easy to see that $\chi$ is a $(\varphi, \psi)$-\common. 
The following result further shows that if we assume the output formulas are in CNF, then the individual clauses of $\theta$ and $\theta'$ are all weak contrasts whereas the clauses of $\chi$ are \common es of $\varphi$ and $\psi$.

\begin{theorem}\label{thm:weakcontrastsplit}
    Let $\varphi, \psi \in \mathrm{PL}[\tau]$ and let $(\theta, \theta', \chi)$ be the output of the global contrastive explanation problem with input $(\varphi, \psi)$. Further assume that $\theta$, $\theta'$ and $\chi$ are in \(\mathrm{CNF}\). Then \textbf{(1)} each clause of $\theta$ is a weak $(\varphi, \psi)$-contrast, \textbf{(2)} each clause of $\theta'$ is a weak $(\psi, \varphi)$-contrast, and \textbf{(3)} each clause of $\chi$ is a $(\varphi, \psi)$-likeness.
\end{theorem}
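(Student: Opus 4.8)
The plan is to read all three statements off the two equivalences $\theta \land \chi \equiv \varphi \land \neg \psi$ and $\theta' \land \chi \equiv \neg \varphi \land \psi$ guaranteed by condition~1 of Definition~\ref{def:globaltwoinputproblem}, invoking the secondary optimization criterion (maximality of $\size(\chi)$) only for the weak-contrast claims. I first treat the satisfiable case, so that no clause is empty and every nonempty clause $C$ has $\size(C) \geq 1$; the degenerate cases are relegated to a remark. Part~\textbf{(3)} is immediate and uses no optimality: if $C$ is a clause of $\chi$ then $\chi \vDash C$, and since $\varphi \land \neg \psi \equiv \theta \land \chi \vDash \chi$ and $\neg \varphi \land \psi \equiv \theta' \land \chi \vDash \chi$, transitivity yields $\varphi \land \neg \psi \vDash C$ and $\neg \varphi \land \psi \vDash C$, so $C$ is a $(\varphi,\psi)$-\common by definition.

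For part~\textbf{(1)}, let $C$ be a clause of $\theta$. The chain $\varphi \land \neg \psi \equiv \theta \land \chi \vDash \theta \vDash C$ gives the first half of the weak-contrast condition. For the second half I would argue by contradiction: suppose $\neg \varphi \land \psi \vDash C$. Writing $\theta = \theta_0 \land C$, where $\theta_0$ is the conjunction of the remaining clauses, I consider the alternative triple $(\theta_0, \theta', \chi \land C)$ obtained by shifting $C$ into the shared context. This is again a valid output: $\theta_0 \land (\chi \land C) = \theta \land \chi \equiv \varphi \land \neg \psi$ is unchanged, and since $\theta' \land \chi \equiv \neg \varphi \land \psi \vDash C$ by assumption we have $\theta' \land \chi \vDash C$, hence $\theta' \land (\chi \land C) \equiv \theta' \land \chi \equiv \neg \varphi \land \psi$ as required. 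The total size is unchanged because $\size(\theta_0) + \size(\chi \land C) = (\size(\theta) - \size(C)) + (\size(\chi) + \size(C)) = \size(\theta) + \size(\chi)$, so the new triple also has minimal total size; yet $\size(\chi \land C) = \size(\chi) + \size(C) > \size(\chi)$ because $C$ is nonempty. This contradicts the maximality of $\size(\chi)$ among minimal-total-size outputs, so $\neg \varphi \land \psi \nvDash C$ and $C$ is a weak $(\varphi,\psi)$-contrast.

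Part~\textbf{(2)} follows by symmetry. Using $\psi \land \neg \varphi = \neg \varphi \land \psi$ and $\neg \psi \land \varphi = \varphi \land \neg \psi$, the defining conditions for input $(\varphi,\psi)$ with output $(\theta,\theta',\chi)$ become exactly those for input $(\psi,\varphi)$ with output $(\theta',\theta,\chi)$, and since both size criteria are symmetric in $\theta,\theta'$, the triple $(\theta',\theta,\chi)$ is an optimal output for $(\psi,\varphi)$. Applying part~\textbf{(1)} to this instance shows each clause of $\theta'$ is a weak $(\psi,\varphi)$-contrast.

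The hard part will be the bookkeeping in part~\textbf{(1)}: one must check that the modified triple $(\theta_0,\theta',\chi \land C)$ is genuinely a valid output of the same (hence minimal) total size before the secondary criterion can be applied, and the equivalence $\theta' \land \chi \land C \equiv \theta' \land \chi$ is precisely what makes moving $C$ into the context harmless. A secondary subtlety is the nonemptiness of $C$, which is why I reduce to the satisfiable case first: if $\varphi \land \neg \psi$ (resp. $\neg \varphi \land \psi$) is unsatisfiable then the corresponding CNF collapses to an empty clause, and the statement should be read with the usual convention that clauses are nonempty.
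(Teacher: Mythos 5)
Your proposal is correct and takes essentially the same route as the paper: part (3) is read directly off condition 1, part (2) follows by swapping $(\theta,\theta')$ for the input $(\psi,\varphi)$, and for part (1) you derive a contradiction by shifting a likeness clause $C$ from $\theta$ into $\chi$, which preserves condition 1 and the minimal total size while strictly increasing $\size(\chi)$, violating the secondary criterion. The only (harmless) difference is that the paper, treating CNF-formulas as clause sets, splits into three cases ($C \in \theta'$, $C \in \chi$, and neither), obtaining contradictions with the primary minimality condition in the first two, whereas your purely syntactic size accounting (conjunction size is additive even for duplicate clauses) collapses all cases into the single move; like the paper, you also correctly note that the argument presupposes nonempty clauses.
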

\begin{proof}
    We start with 1. Let $C \in \theta$ be a clause. Now $\varphi \land \neg \psi \vDash \theta \vDash C$ so $C$ is either a weak $(\varphi, \psi)$-contrast or a $(\varphi,\psi)$-\common. Assume for a contradiction that $C$ is a $(\varphi, \psi)$-\common. We have several cases.
    \begin{enumerate}
        \item [(a)] If $C \in \theta'$, then by moving $C$ from both $\theta$ and $\theta'$ to $\chi$ we preserve the equivalences of condition 1 of the contrastive explanation problem. Furthermore,
        \[
            \size(\theta \setminus \{C\}) + \size(\theta' \setminus \{C\}) + \size(\chi \land C) < \size(\theta) + \size(\theta') + \size(\chi),
        \]
        which is a contradiction by condition 2 of the problem.
        \item [(b)] If $C \in \chi$, then by removing $C$ from $\theta$ we get $\theta \setminus \{C\} \land \chi \equiv \theta \land \chi$ so we again contradict condition 2 of the problem.
        \item [(c)] If $C \notin \theta'$ and $C \notin \chi$, then by moving $C$ from $\theta$ to $\chi$ we again preserve the equivalences of condition 1 since $C$ is a $(\varphi, \psi)$-\common. Furthermore,
        \[
            \size(\theta \setminus \{C\}) + \size(\theta') + \size(\chi \land C) = \size(\theta) + \size(\theta') + \size(\chi),
        \]
        and $\size(\chi \land C) > \size(\chi)$. This is a contradiction with condition 3 of the contrastive explanation problem. Thus $C$ is a weak $(\varphi, \psi)$-contrast, proving claim 1.
    \end{enumerate}
    Claim 2 follows from the above by symmetry, while claim 3 follows directly from condition 1 of the problem. 
\end{proof}
\begin{remark}
    Minimizing $\size(\theta) + \size(\theta') + \size(\chi)$ is not the only conceivable minimality condition for the formulas $\theta \land \chi$ and $\theta' \land \chi$. If one thinks of these as two formulas to be minimized, then another natural condition could be $\size(\theta \land \chi) + \size(\theta' \land \chi)$. For the global contrastive explanation problem, changing to this alternate condition would forfeit the property of Theorem \ref{thm:weakcontrastsplit}, but gain a property, where clauses that are strong contrasts can be easily identified from the output.
\end{remark}
We also note that the same kind of property holds for the contrastive explanation problem. The difference is that here, the contrasts are found between $S$ and $S'$ rather than $\varphi$ and $\psi$. 

\begin{theorem}\label{thm:twoinputweakcontrastsplit}
    Let $\varphi, \psi \in \mathrm{PL[\tau]}$ and let $(\theta, \theta', \chi)$ be the output of the contrastive explanation problem with input $(S, S',\varphi, \psi)$. Further assume that $\theta$, $\theta'$ and $\chi$ are in \(\mathrm{CNF}\). Then \textbf{(1)} each clause of $\theta$ is a weak $(S, S')$-contrast, \textbf{(2)} each clause of $\theta'$ is a weak $(S', S)$-contrast, and \textbf{(3)} each clause of $\chi$ is a $(S, S')$-likeness.
\end{theorem}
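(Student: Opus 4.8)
The plan is to follow the proof of Theorem~\ref{thm:weakcontrastsplit} almost verbatim, adapting it to the fact that the contrastive explanation problem relates $S,S'$ to $\theta,\theta',\chi$ by logical \emph{consequence} rather than equivalence. Throughout I identify the finite sets $S,S'$ with the conjunctions $\bigwedge S$ and $\bigwedge S'$, so that ``weak $(S,S')$-contrast'' and ``$(S,S')$-\common'' are read off the definitions by substituting $\bigwedge S$ for $\varphi$ and $\bigwedge S'$ for $\psi$. Claims (1) and (2) are symmetric: the problem is invariant under simultaneously swapping $S\leftrightarrow S'$, $\theta\leftrightarrow\theta'$ and $\varphi\leftrightarrow\psi$ (which exchanges $\varphi\land\neg\psi$ with $\neg\varphi\land\psi$ and fixes $\chi$ and the size conditions), so it suffices to establish (1) and (3).

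For (3), let $C$ be a clause of $\chi$. From condition~1 we have $S\vDash\theta\land\chi\vDash C$ and $S'\vDash\theta'\land\chi\vDash C$, hence $S\vDash C$ and $S'\vDash C$; these immediately give $S\land\neg S'\vDash C$ and $\neg S\land S'\vDash C$, so $C$ is a $(S,S')$-\common. For (1), fix a clause $C$ of $\theta$. Again $S\vDash\theta\vDash C$, so $S\vDash C$ and in particular $S\land\neg S'\vDash C$, which is the first half of the definition of a weak contrast. It remains to rule out $\neg S\land S'\vDash C$: if this fails then $C$ is a weak $(S,S')$-contrast and we are done, so I assume for contradiction that $\neg S\land S'\vDash C$, i.e.\ that $C$ is a $(S,S')$-\common.

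The key new step, absent from the equivalence-based argument of Theorem~\ref{thm:weakcontrastsplit}, is to upgrade this assumption to $S'\vDash C$. Indeed, take any model $m$ of $S'$: if $m\vDash S$ then $m\vDash C$ because $S\vDash C$, and if $m\nvDash S$ then $m\vDash\neg S\land S'$ and $m\vDash C$ by assumption; hence $S'\vDash C$. With this in hand I run the same three cases as in Theorem~\ref{thm:weakcontrastsplit}. If $C\in\theta'$, moving $C$ out of both $\theta$ and $\theta'$ and into $\chi$ leaves the clause sets of $\theta\land\chi$ and $\theta'\land\chi$ unchanged, so condition~1 still holds while the total size drops by $\size(C)$, contradicting condition~2. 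If $C\in\chi$, deleting $C$ from $\theta$ again leaves both $\theta\land\chi$ and $\theta'\land\chi$ unchanged (since $C$ survives in $\chi$) and strictly decreases the size, again contradicting condition~2. Finally, if $C\notin\theta'$ and $C\notin\chi$, moving $C$ from $\theta$ to $\chi$ keeps $\theta\land\chi$ literally the same formula, and on the $S'$ side I use $S'\vDash C$ to conclude $S'\vDash\theta'\land\chi\land C$, while $\theta'\land\chi\land C\vDash\theta'\land\chi\vDash\neg\varphi\land\psi$; thus condition~1 is preserved, the total size is unchanged, but $\size(\chi)$ strictly increases, contradicting condition~3. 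Hence $C$ cannot be a \common, proving (1), and (2) follows by the symmetry noted above.

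The main obstacle is precisely this consequence-versus-equivalence gap. In Theorem~\ref{thm:weakcontrastsplit} the relation $\theta'\land\chi\equiv\neg\varphi\land\psi$ forces every \common\ to be implied by $\neg\varphi\land\psi$, so adjoining it to $\chi$ trivially preserves condition~1; here condition~1 only asserts $S'\vDash\theta'\land\chi$, and adding $C$ to $\chi$ could in principle break $S'\vDash\theta'\land\chi\land C$. The short model-theoretic derivation of $S'\vDash C$ is exactly what closes this gap, and it is the only point at which the weaker consequence formulation demands more than the global argument. The sole remaining technicality is to exclude the degenerate empty clause (of size $0$), for which the strict size inequalities above would fail; this case can be discharged by observing that an empty clause in $\theta$ would force $\bigwedge S$ unsatisfiable and trivialize the instance, so we may assume $\size(C)>0$.
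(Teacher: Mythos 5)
Your proof is correct and takes essentially the same route as the paper: the paper closes the consequence-versus-equivalence gap with the one-line observation that $S \vDash \varphi \land \neg\psi$ and $S' \vDash \neg\varphi \land \psi$ force $S \land \neg S' \equiv S$ (and symmetrically $\neg S \land S' \equiv S'$), then invokes the exchange argument of Theorem~\ref{thm:weakcontrastsplit}, and your model-theoretic derivation of $S' \vDash C$ is just that observation unpacked. Your explicit adaptation of case (c) and your flagging of the empty-clause degeneracy (where $\size(C)=0$ kills the strict inequalities) are sound refinements of details the paper leaves implicit.
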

\begin{proof}
    We first note that since $S \vDash \varphi \land \neg \psi$ and $S' \vDash \neg \varphi \land \psi$, we have $S \land \neg S' \equiv S$. Now for a clause $C \in \theta$, we have $S \vDash \theta \vDash C$ and we can use the same arguments as in the proof of Theorem \ref{thm:weakcontrastsplit} to prove the claim. 
\end{proof}
Note that Theorem \ref{thm:twoinputweakcontrastsplit} also works for the counterfactual problems if we consider an existentially quantified $S'$ to be implicitly present in the definitions.

\subsection{Link to Existing Contrastive Explanations}

In this subsection we link the counterfactual difference problem to existing notions in the literature, such as CXp. For example, we show that if the input $S$ defines an assignment and output formulas $\theta$, $\theta'$ and $\chi$ are given in $\mathrm{CNF}$, then the output gives the minimal changes required to flip the truth values of $\varphi$ and $\psi$. The difference from CXp is that our definition gives a cardinality-minimal solution rather than a subset-minimal one.

We start with the following theorem which shows that in the special case of the counterfactual difference problem where \(S\) defines a partial assignment, the optimal solutions also define partial assignments. Example \ref{example:sea-bird-example} shows that in general we cannot require \(\chi\) to even define a partial assignment.

\begin{theorem}\label{thm:partial-partial}
    Let $\varphi, \psi \in \mathrm{PL}[\tau]$ and let $S \subseteq \mathrm{PL}[\tau]$ define a partial assignment. Let $(\theta, \theta', \chi)$ be the output of the counterfactual difference problem $(S, \varphi, \psi)$. Further assume that $\theta$, $\theta'$ and $\chi$ are in $\mathrm{CNF}$. Then \textbf{(1)} the formulas $\theta$ and $\theta'$ are partial assignments and \textbf{(2)} the formulas $\theta \land \chi$ and $\theta' \land \chi$ define partial assignments.
The same also holds for the counterfactual contrastive explanation problem.
\end{theorem}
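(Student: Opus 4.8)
The plan is to treat the two claims separately, proving claim~1 (and the easy half of claim~2) by local clause-replacement arguments against the optimality conditions of Definition~\ref{def:problem:one_input_equivalence}, and then isolating the genuinely delicate part. Write $S \equiv s$ with $s = \ell_1 \land \dots \land \ell_k$ a consistent partial assignment; in particular $S$ is satisfiable, so by condition~2 the formula $\theta' \land \chi$ is satisfiable, and I would fix once and for all a model $m$ of it. Also, condition~1 directly yields $\theta \land \chi \equiv s$, so $\theta \land \chi$ already defines a partial assignment, disposing of one half of claim~2.

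For claim~1, I would show that no clause of $\theta$ or of $\theta'$ can contain two distinct literals. For a clause $C \in \theta$ we have $s \vDash \theta \vDash C$, and (after discarding tautological clauses, whose removal only shrinks the formula) a consistent term that entails a non-tautological clause must contain one of its literals, so pick $\ell \in C$ with $s \vDash \ell$. Replacing $C$ by the unit clause $\{\ell\}$ gives $\theta^{\ast}$ with $\theta^{\ast} \vDash \theta$, hence $\theta^{\ast} \land \chi \vDash \theta \land \chi \equiv s$, while $s \vDash \theta^{\ast}$ supplies the converse; thus $\theta^{\ast} \land \chi \equiv s$, the formulas $\theta'$ and $\chi$ are untouched, so all conditions persist, yet the total size strictly drops, contradicting minimality (condition~3) unless $|C| = 1$. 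For a clause $C \in \theta'$ I would instead use the fixed model: $m \vDash C$, so pick $\ell \in C$ with $m \vDash \ell$ and replace $C$ by $\{\ell\}$, obtaining a strengthening $(\theta')^{\ast}$ with $(\theta')^{\ast} \land \chi \vDash \theta' \land \chi \vDash \neg\varphi\land\psi$; the model $m$ still witnesses satisfiability (condition~2), and the size again strictly drops. Hence both $\theta$ and $\theta'$ are conjunctions of literals, i.e.\ partial assignments, which is claim~1. Note that the equivalence constraint on the $\theta$-side and the entailment-plus-satisfiability constraint on the $\theta'$-side force two slightly different choices of the replacing literal, but both go through.

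The remaining part of claim~2 --- that $\theta' \land \chi$ defines a partial assignment --- is the main obstacle, precisely because $\chi$ itself need not be a term (Example~\ref{example:sea-bird-example}), so $\theta' \land \chi$ is not syntactically a term and one must show it collapses to one semantically. The approach I would take is a clause-by-clause analysis of $\chi$. For each clause $C \in \chi$, the equivalence $\theta \land \chi \equiv s$ constrains how $C$ may behave: relative to the term $\theta$ it is either made redundant, or resolved to a single literal of $s$, or its remaining variables are pinned down by other clauses of $\chi$. I would then argue that in an optimal solution the same clause must also be resolved to a unit literal (or rendered redundant) by the term $\theta'$, so that $\theta' \land \chi$ simplifies to a conjunction of literals; a clause left genuinely disjunctive by $\theta'$ should be shrinkable or shiftable between $\theta'$ and $\chi$ so as to decrease the total size or increase $\size(\chi)$, contradicting conditions~3 and~4. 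The step I expect to be most delicate is exactly controlling this shared use of $\chi$ by both terms at once: $\chi$ is constrained by an \emph{equivalence} on the $\theta$-side but only by an \emph{entailment} on the $\theta'$-side, and the awkward case is a clause of $\chi$ that is redundant for reconstructing $s$ yet active in forcing $\neg\varphi\land\psi$. Ruling this case out (or showing it still leaves $\theta' \land \chi$ equivalent to a term) is where the real work lies.

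Finally, for the counterfactual contrastive explanation problem (Definition~\ref{def:problem:one_input_consequence}) the only difference is that condition~1 gives $S \vDash \theta \land \chi \vDash \varphi \land \neg\psi$ rather than an equivalence, so $\theta \land \chi$ is no longer immediately a term. I would handle it symmetrically to the $\theta' \land \chi$ side: the clause-replacement argument for $\theta$ still works (choosing literals of $s$ keeps $s \vDash \theta^{\ast}$, and strengthening preserves $\theta^{\ast} \land \chi \vDash \varphi \land \neg\psi$ and the satisfiability witnessed by $s$), and the structural argument establishing that $\theta \land \chi$ defines a partial assignment is the mirror image of the one sketched above for $\theta' \land \chi$.
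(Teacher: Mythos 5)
Your handling of claim 1 and of the easy half of claim 2 is correct and essentially the paper's own argument: the paper too shrinks clauses of $\theta$ to literals of $S$ (it splits into cases by how many $S$-literals the clause contains, where you replace by a unit outright, but that is cosmetic), and for $\theta'$ it uses exactly your device of fixing a satisfying assignment of $\theta' \land \chi$ (guaranteed by condition 2) and picking from each clause a literal true in it. The problem is that for the part you yourself identify as the crux --- that $\theta' \land \chi$ defines a partial assignment --- you offer only a plan, and the plan as stated stalls, because it is missing the two ideas that make the paper's proof go through.

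First, since $S \vDash \theta \land \chi$ and $S$ is a partial assignment, \emph{every} non-tautological clause $C \in \chi$ must contain a literal of $S$; this is the bridge between the equivalence constraint on the $\theta$-side and the mere entailment on the $\theta'$-side, and nothing in your sketch supplies it. Second, the paper calls a literal $\ell \in C$ \emph{relevant} when $\theta' \land (\chi \setminus \{C\}) \land \ell$ is satisfiable, notes that every literal of $C$ satisfied by some model of $\theta' \land \chi$ is relevant (so if each clause of $\chi$ has exactly one relevant literal, $\theta' \land \chi$ is already equivalent to a partial assignment), fixes a clause $C$ with at least two relevant literals, and splits into two cases: (a) some relevant $\ell \in C \cap S$ exists --- replace $C$ by $\ell$; relevance preserves condition 2, $\ell \in S$ preserves $\theta \land \chi_\ast \equiv S$ (respectively $S \vDash \theta \land \chi_\ast$), and the total size strictly drops; (b) no relevant literal of $C$ lies in $S$ --- then $C$ contains a non-relevant $\ell' \in S$ in addition to at least two relevant literals, so $|C| \geq 3$, and deleting $C$ while adding $\ell'$ to $\theta$ and one relevant $\ell$ to $\theta'$ preserves conditions 1 and 2 and cuts the total size by at least $1$. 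Your ``awkward case'' --- a clause of $\chi$ redundant for reconstructing $s$ but active in forcing $\neg\varphi \land \psi$ --- is precisely case (b): it is not ruled out but absorbed by this three-way shift, and without the guaranteed $S$-literal $\ell'$ there is nothing to move into $\theta$, while shifting $C$ wholesale between $\chi$ and $\theta'$ changes neither the total size nor $\size(\chi)$ in your favor, so neither condition 3 nor condition 4 fires (indeed the paper never needs condition 4 here: both cases strictly decrease total size). The same criticism applies to your closing ``mirror image'' remark for $\theta \land \chi$ in the counterfactual contrastive problem: the paper handles that problem inside the same relevance argument, weakening $\equiv$ to $\vDash$ where needed, and a naive mirror would additionally have to explain why strengthening $\chi$ does not destroy the satisfiability of $\theta' \land \chi$ demanded by condition 2 --- so that sketch, too, is not yet a proof.
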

\begin{proof}
    We prove the claim for the counterfactual difference problem. The proof for the counterfactual contrastive explanation problem is very similar so we note the differences as they come up. 
    
    We first show that $\theta$ is a partial assignment. Assume that $\theta$ has a clause $C$ with $C \geq 2$. We show how to obtain a smaller partial assignment $\theta_\ast$ with $\theta_\ast \land \chi \equiv S$, thus contradicting condition 3 of the problem.
    
    First assume that $C$ has no literals from $S$. Now we have $S \vDash \theta \land \chi \vDash C$. All literals in $C$ are either in direct opposition with $S$ or use proposition symbols not mentioned in $S$. The only remaining option is that $C \equiv \top$ and thus $C$ can be safely removed from $\theta$.
    
    Now assume $C$ has at least two different literals $\ell_1$ and $\ell_2$ from $S$. 
    We show that removing $\ell_2$ from $C$ maintains the equivalence $\theta_\ast \land \chi \equiv S$. For $\theta_\ast \land \chi \vDash S$ note that $C \setminus \{\ell_2\} \vDash C$ so $\theta_\ast \vDash \theta$. For the direction $S \vDash \theta_\ast \land \chi$ we have $S \vDash \ell_1 \vDash C \setminus \{\ell_2\}$ since $\ell_1 \in C$ and for all other clauses $D \in \theta_\ast \land \chi$ we have $S \vDash \theta \land \chi \vDash D$.
    
    Finally, if $C$ has exactly one literal $\ell$ from $S$, we can safely replace $C$ with just the literal $\ell$. Now $\ell \vDash C$ so $\theta_\ast \land \chi \vDash \theta \land \chi \vDash S$. For the other direction, since $\ell \in S$ we have $S \vDash \ell$ and for the other clauses $D \in \theta_\ast \land \chi$ we have $S \vDash \theta \land \chi \vDash D$. By removing or replacing each $C$ as above, we end up with a smaller partial assignment $\theta_\ast$ as desired.

    We move on to $\theta'$. If $\theta'$ is not a partial assignment, we can find a partial assignment $\theta'_\ast$ with $\theta'_\ast \vDash \theta'$ as follows. Since $S$ is satisfiable, by condition 2 of the counterfactual difference problem, $\theta' \land \chi$ is also satisfiable. Let $s'$ be a satisfying assignment. For every clause $C$ in $\theta'$ choose a literal $\ell$ with $\ell \in s'$ and replace $C$ with $\ell$ in $\theta'$. This procedure produces a partial assignment $\theta_\ast \subseteq s'$. Now $\theta'_\ast \vDash \theta'$ so condition 1 of the problem holds. Since $s' \vDash \theta'_\ast \land \chi$, condition 2 also holds. Finally $\theta'_\ast$ has smaller size than $\theta'$ as at least one literal has been removed. This contradicts condition 3.

    For the counterfactual contrastive explanation problem, we handle $\theta'$ as above. We handle $\theta$ in the same way as $\theta'$, but instead of some satisfying assignment $s'$ we use the input partial assignment $S$. 

    By the definition of the counterfactual difference problem, $\theta \land \chi \equiv S$ is equivalent to a partial assignment. It remains to show that $\theta' \land \chi$ is also equivalent to some partial assignment. 

    Assume for  contradiction that $\theta' \land \chi$ is not equivalent to a partial assignment. We will show that we can find $\theta_\ast$, $\theta'_\ast$ and $\chi_\ast$ that satisfy conditions 1 and 2 of the problem but have smaller total size than $\theta$, $\theta'$ and $\chi$, thus contradicting condition 3.

    Since $\theta' \land \chi$ is satisfiable, any clause $C \in \chi$ has at least one literal $\ell$ with $\theta' \land (\chi \setminus \{C\}) \land \ell$ satisfiable. We call such literals \emph{relevant}. If each clause has only one relevant literal, then $\theta' \land \chi$ is equivalent to a partial assignment. We now fix a clause $C \in \chi$ with at least two relevant literals.

    Since $S \vDash \theta \land \chi$ (for both problems), the clause $C$ must have at least one literal from $S$, or be equivalent to $\top$ and thus easily removable. If $C$ has a relevant literal $\ell \in S$, then we replace $C$ with the literal $\ell$ in $\chi$ to obtain $\chi_\ast$. Now $\theta \land \chi_\ast \vDash \theta \land \chi \vDash S$ and $S \vDash \ell \land \theta \land \chi \vDash \theta \land \chi_\ast$ so $\theta \land \chi_\ast \equiv S$. For the counterfactual contrastive explanation problem only the direction $S \vDash \theta \land \chi_\ast$ holds and is needed here. On the other hand, $\chi_\ast \vDash \chi$ guarantees $\theta' \land \chi_\ast \vDash \theta' \land \chi \vDash \neg \varphi \land \psi$ so condition 1 holds. The relevance of $\ell$ guarantees that $\theta' \land \chi_\ast$ is satisfiable so condition 2 holds.

    Otherwise, let $\ell \in C$ be a relevant literal with $\ell \notin S$ and let $\ell' \in C \cap S$. We remove $C$ from $\chi$, add $\ell'$ to $\theta$ and add $\ell$ to $\theta'$ as singleton clauses. Now $\theta_\ast \land \chi_\ast \equiv S$ (or just $S \vDash \theta_\ast \land \chi_\ast$) as above. For $\theta'_\ast \land \chi_\ast$, it suffices to note that $\ell \vDash C$ so $\theta'_\ast \land \chi_\ast \vDash \theta' \land \chi \vDash \neg \varphi \land \psi$. For the size condition, note that $C$ has at least three literals: $\ell' \in S$, the relevant literal $\ell$ and at least one more relevant literal. Therefore the size of $\chi$ is reduced by at least 3 by removing $C$ while the sizes of $\theta$ and $\theta'$ both increase by 1 with the additions made. 

    In all cases above we have found $\theta_\ast$, $\theta'_\ast$ and $\chi_\ast$ that satisfy conditions 1 and 2 of the problem while having smaller total size for condition 3. This is a contradiction. Thus $\theta' \land \chi$ is equivalent to a partial assignment. 
\end{proof}

In the case where \(S\) defines a \(\tau\)-assignment, we get a stronger guarantee on optimal outputs for the counterfactual difference problem. 

\begin{theorem}\label{thm:assignment-assignment}
    Let $\varphi, \psi \in \mathrm{PL}[\tau]$, let $S \subseteq \mathrm{PL}[\tau]$ define a $\tau$-assignment and let $(\theta, \theta', \chi)$ be the output of the counterfactual difference problem with input $(S, \varphi, \psi)$. Further assume that $\theta$, $\theta'$ and $\chi$ are in $\mathrm{CNF}$. Now the formulas $\theta \land \chi$ and $\theta' \land \chi$ define $\tau$-assignments.
\end{theorem}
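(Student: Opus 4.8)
The plan is to prove the two assertions separately and to reduce everything to a statement about which proposition symbols are fixed by $\theta \land \chi$ and by $\theta' \land \chi$. Throughout, write $\mathrm{vars}(\alpha)$ for the set of proposition symbols occurring in a formula $\alpha$. The claim for $\theta \land \chi$ is immediate: condition 1 of the counterfactual difference problem gives $S \equiv \theta \land \chi$, and by hypothesis $S$ defines a $\tau$-assignment, so $\theta \land \chi$ defines the very same $\tau$-assignment. The work is entirely in $\theta' \land \chi$. By Theorem \ref{thm:partial-partial} we already know that $\theta' \land \chi$ is equivalent to some partial assignment $s'$, so it suffices to prove $\mathrm{vars}(s') = \tau$, i.e. that no proposition symbol is left free by $\theta' \land \chi$.

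First I would record that, since $\theta \land \chi \equiv S$ and $S$ is total, every $p \in \tau$ must occur in $\theta \land \chi$ (otherwise flipping $p$ in the unique model would yield a second model), so $\mathrm{vars}(\theta) \cup \mathrm{vars}(\chi) = \tau$. The goal $\mathrm{vars}(s') = \tau$ then follows from two containments, $\mathrm{vars}(\chi) \subseteq \mathrm{vars}(s')$ and $\mathrm{vars}(\theta) \subseteq \mathrm{vars}(\theta') \cup \mathrm{vars}(\chi)$, because together with the trivial $\mathrm{vars}(\theta') \subseteq \mathrm{vars}(s')$ they give $\tau = \mathrm{vars}(\theta) \cup \mathrm{vars}(\chi) \subseteq \mathrm{vars}(\theta') \cup \mathrm{vars}(\chi) \subseteq \mathrm{vars}(s') \subseteq \tau$.

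For $\mathrm{vars}(\chi) \subseteq \mathrm{vars}(s')$ I would reuse the structural fact established inside the proof of Theorem \ref{thm:partial-partial}: in an optimal output every clause $C \in \chi$ has exactly one relevant literal $\ell_C$ with respect to $\theta' \land \chi$, where $\ell$ is relevant if $\theta' \land (\chi \setminus \{C\}) \land \ell$ is satisfiable. Every other literal $m \in C$ is then irrelevant, which means $\theta' \land (\chi \setminus \{C\}) \vDash \overline{m}$, so its variable is fixed by $s'$; and since all literals of $C$ other than $\ell_C$ are thereby forced false, the clause forces $\ell_C$, giving $\theta' \land \chi \vDash \ell_C$, so the variable of $\ell_C$ is fixed as well. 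Hence every variable occurring in any clause of $\chi$ is set by $s'$.

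The containment $\mathrm{vars}(\theta) \subseteq \mathrm{vars}(\theta') \cup \mathrm{vars}(\chi)$ is where the secondary optimality criterion does the work, and this is the step I expect to be the main obstacle. Suppose some $p \in \mathrm{vars}(\theta)$ lies in neither $\mathrm{vars}(\theta')$ nor $\mathrm{vars}(\chi)$. Since $\theta$ is a partial assignment (Theorem \ref{thm:partial-partial}) it contains a single literal $\ell_p$ on $p$, whose polarity agrees with $S$. Moving this literal out of $\theta$ and into $\chi$ as a unit clause yields the candidate $(\theta \setminus \{\ell_p\},\, \theta',\, \chi \land \ell_p)$: the conjunction $\theta \land \chi$ is unchanged, so $S \equiv \theta \land \chi$ and the first half of condition 1 survive; $\theta' \land \chi \land \ell_p$ still entails $\neg \varphi \land \psi$; and because $p$ occurs in neither $\theta'$ nor $\chi$ it is free in the satisfiable $\theta' \land \chi$, so $\theta' \land \chi \land \ell_p$ stays satisfiable, preserving condition 2. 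This modification keeps the total size fixed while strictly increasing $\size(\chi)$, contradicting the maximality of $\chi$ in condition 3. Hence no such $p$ exists, both containments hold, and $\mathrm{vars}(s') = \tau$, so $\theta' \land \chi$ defines a $\tau$-assignment as required.
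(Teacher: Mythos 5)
Your proof is correct, but it is organized differently from the paper's. The paper argues directly: assuming some $p$ is left undetermined by $\theta' \land \chi$, it observes that the $S$-polarity literal $\ell$ on $p$ must occur positively somewhere in the CNF of $\theta \land \chi$ (since $\theta \land \chi \equiv S$ entails $\ell$), and then splits on where: if $\ell \in \theta$, it moves $\ell$ into $\chi$, keeping total size fixed while growing $\size(\chi)$, contradicting the secondary criterion; if $\ell$ sits in a clause $C \in \chi$ with at least two literals, it replaces $C$ by the unit $\ell$, strictly shrinking total size and contradicting minimality. Your containment decomposition reproduces the first case exactly (your step (ii) is the paper's $\theta$-case, exchange and all), but you eliminate the second case entirely by proving the stronger structural fact $\mathrm{vars}(\chi) \subseteq \mathrm{vars}(s')$, derived from the ``exactly one relevant literal per clause of $\chi$'' property that is implicit in the proof of Theorem \ref{thm:partial-partial}. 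This buys a cleaner localization of where each optimality condition acts --- condition 3 (via the earlier proof) pins down all variables of $\chi$, and condition 4 handles the variables exclusive to $\theta$ --- and your intermediate fact is independently interesting. What it costs is self-containedness: the one-relevant-literal property is a byproduct of the \emph{proof} of Theorem \ref{thm:partial-partial}, not of its statement, so to make your argument rigorous that fact should be extracted as a standalone lemma (it does hold here, since a total assignment is a partial one, so the hypothesis of that theorem applies). Two trivial slips: the maximality of $\size(\chi)$ is condition 4 of Definition \ref{def:problem:one_input_equivalence}, not condition 3; and when you say $\theta$ contains a literal on $p$ ``whose polarity agrees with $S$,'' the justification is that $S \equiv \theta \land \chi \vDash \theta$, which you should state. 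Neither affects correctness.
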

\begin{proof}
    By the definition of the problem, $\theta \land \chi$ defines the same $\tau$-assignment as $S$. From Theorem \ref{thm:partial-partial} we know that $\theta$ and $\theta'$ are partial assignments and $\theta' \land \chi$ defines a partial assignment. It remains to show that $\theta' \land \chi$ defines a full $\tau$-assignment. 

    Let $s'$ be the partial assignment defined by $\theta' \land \chi$ and assume for contradiction that there is $p \in \tau$ such that $p, \neg p \notin s'$. Since $S \equiv \theta \land \chi$ defines a full $\tau$-assignment, there is at least one occurrence of $p$ in $\theta \land \chi$. Furthermore, as all output formulas are in CNF, $p$ must occur as the literal $\ell$ given by the $\tau$-assignment defined by $S$.
    
    If $\ell$ occurs in the partial assignment $\theta$, then we can remove $\ell$ from $\theta$ and add it as a clause of $\chi$, obtaining new formulas $\theta_\ast$ and $\chi_\ast$. Clearly $\theta_\ast \land \chi_\ast \equiv S$. Additionally $\theta' \land \chi_\ast \vDash \theta' \land \chi \vDash \neg \varphi \land \psi$ so condition 1 of the problem holds for the new formulas. For condition 2, we note that $\theta' \land \chi$ defines a partial assignment not containing $p$ so $\theta' \land \chi_\ast \equiv \theta' \land \chi \land \ell$ is satisfiable. For condition 3, we note that the total size of the three formulas stays the same. Finally, for condition 4, we have $\size(\chi_\ast) > \size(\chi)$, which is a contradiction.

    If $\ell$ occurs in $\chi$, then it must be in a clause $C$ with at least two literals. Here we replace the clause $C$ with the literal $\ell$ in $\chi$, obtaining $\chi_\ast$ with smaller size. For condition 1, we see that $\ell \vDash C$ so $\theta \land \chi_\ast \vDash \theta \land \chi \vDash S$. On the other hand $S \vDash \ell$ and $S \vDash \theta \land \chi \setminus \{C\}$ so $S \vDash \theta \land \chi_\ast$. Finally $\theta' \land \chi_\ast \vDash \theta' \land \chi \vDash \neg \varphi \land \psi$ so condition 1 holds. For condition 2 we again need only note that $p$ does not occur in the partial assignment defined by $\theta' \land \chi$. The smaller size of $\chi_\ast$ contradicts condition 3 of the problem. 
\end{proof}

The above theorem does not hold for the counterfactual contrastive explanation problem. This is to be expected as the condition $S \vDash \theta \land \chi$ is not sufficient to force $\theta \land \chi$ to define an assignment, again highlighting the fact that the counterfactual contrastive explanation problem is concerned with comparing \emph{reasons} for the formulas $\varphi$ and $\psi$ rather than assignments that satisfy them.


To state the last result of this section, we define the notation $s \triangle \lambda = (s \setminus \lambda) \cup \{\overline{\ell} \mid \ell \in \lambda\}$, where $s$ and $\lambda$ are viewed as sets of literals with $\lambda \subseteq s$.
This result is of particular interest, as when the second input formula \(\psi\) is the negation \(\neg \varphi\) of the first, the set $\lambda$ of propositions to be flipped is a cardinality minimal CXp \cite{ignatiev20}.

\begin{theorem}\label{thm:assignment-correspondence}
    Let $s$ be a $\tau$-assignment and let $\varphi, \psi \in \mathrm{PL}[\tau]$ such that $\neg \varphi \land \psi$ is satisfiable. Assume that $s \vDash \varphi \land \neg \psi$ and let $(\theta, \theta', \chi)$ be the output of the counterfactual difference problem for \(\mathrm{PL}\) with input $(S_s, \varphi, \psi)$. Further assume that $\theta$, $\theta'$ and $\chi$ are in $\mathrm{CNF}$. Now $\theta' \land \chi$ defines a $\tau$-assignment $s'$ such that $s' = s \triangle \lambda$ for a cardinality-minimal set $\lambda$ of literals with $s \vDash \lambda$, $s \triangle \lambda \vDash \neg \varphi \land \psi$.
\end{theorem}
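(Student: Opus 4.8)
The plan is to reduce the cardinality-minimality of the flip set to the size-minimality already built into the counterfactual difference problem. First I would invoke Theorem~\ref{thm:assignment-assignment}: since $S_s$ defines the full $\tau$-assignment $s$, the optimal output satisfies $\theta \land \chi \equiv s$ and $\theta' \land \chi$ defines a full $\tau$-assignment $s'$, with $s' \vDash \neg\varphi \land \psi$ by condition~1. Setting $\lambda := \{\ell \in s : \overline{\ell} \in s'\}$, the literals on which $s$ and $s'$ disagree, one gets immediately that $s' = s \triangle \lambda$, that $s \vDash \lambda$ since $\lambda \subseteq s$, and that $s \triangle \lambda = s' \vDash \neg\varphi \land \psi$. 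Hence $\lambda$ is a legitimate flip set, and the only thing left to establish is that it is cardinality-minimal among all such sets.

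My strategy is to show that the optimal total size $\size(\theta)+\size(\theta')+\size(\chi)$ equals exactly $n + |\lambda|$, where $n = |\tau|$, and then compare against a flip set of minimum cardinality. For the upper bound, given any flip set $\mu \subseteq s$ with $s \triangle \mu \vDash \neg\varphi \land \psi$, I would exhibit the explicit candidate output $\theta = \bigwedge \mu$, $\theta' = \bigwedge\{\overline{\ell} : \ell \in \mu\}$ and $\chi = \bigwedge (s \setminus \mu)$; one checks directly that it meets conditions~1 and~2 and has total size $n + |\mu|$. Taking $\mu = \mu_{\min}$ a minimum-cardinality flip set, which exists because $\neg\varphi \land \psi$ is satisfiable and $\tau$ is finite, this bounds the optimal total size from above by $n + |\mu_{\min}|$.

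The core of the argument is the matching lower bound: every valid output whose two conjunctions define full assignments has total size at least $n + |\lambda|$. I would prove this by a per-variable occurrence count. Each $p \in \tau$ must occur at least once across $\theta, \theta', \chi$, since otherwise neither $\theta \land \chi \equiv s$ nor $\theta' \land \chi \equiv s'$ could force $p$ (flipping $p$ in the unique model would yield a second model). For a variable $p$ whose literal in $s$ lies in $\lambda$, I claim it must occur at least twice. The main obstacle is the case where $p$ occurs exactly once and that occurrence is in $\chi$: here, since $p$ is absent from $\theta$ and $\theta'$, flipping $p$ in $s$ (resp. $s'$) preserves satisfaction of $\theta$ (resp. $\theta'$), so it must falsify the unique clause $C \in \chi$ containing $p$; analyzing which literal of $p$ occurs in $C$ and using that both $s$ and $s'$ satisfy $C$ forces a contradiction with the uniqueness of the model of $\theta' \land \chi$. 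The cases where the single occurrence lies in $\theta$ or in $\theta'$ are immediate, as then the \emph{other} conjunction omits $p$ entirely and cannot define a full assignment. Summing the bounds yields total size $\geq (n - |\lambda|)\cdot 1 + |\lambda|\cdot 2 = n + |\lambda|$.

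Combining the two bounds finishes the proof: the optimal output has total size $\geq n + |\lambda|$ by the lower bound and $\leq n + |\mu_{\min}|$ by the construction, while $|\lambda| \geq |\mu_{\min}|$ because $\lambda$ is itself a flip set. Therefore $|\lambda| = |\mu_{\min}|$ and $\lambda$ is cardinality-minimal, as required. I expect the delicate clause-level reasoning in the ``single occurrence inside $\chi$'' case to demand the most care, since that is precisely where the interaction between the shared context $\chi$ and the two differing assignments becomes subtle (cf. the clause $\neg\mathtt{beak\_pouch} \lor \neg\mathtt{small}$ in Example~\ref{example:sea-bird-example}).
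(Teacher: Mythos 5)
Your proposal is correct and follows essentially the same route as the paper's proof: both rest on the occurrence-counting lower bound (each flipped variable must occur at least twice across $\theta$, $\theta'$, $\chi$, each remaining variable at least once, giving $|\tau| + |\lambda|$) combined with the explicit candidate $\theta = \bigwedge \mu$, $\theta' = \bigwedge \overline{\mu}$, $\chi = \bigwedge (s \setminus \mu)$ of size $|\tau| + |\mu|$. The only differences are cosmetic: the paper compares two candidate flip sets directly where you compute the exact optimum, and the paper justifies the two-occurrence claim by the one-line fact that a satisfiable CNF entailing a literal must contain an occurrence of that literal, where you re-derive the same fact via your model-flipping case analysis.
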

\begin{proof}
    We know from Theorem \ref{thm:assignment-assignment} that $\theta' \land \chi$ defines a $\tau$-assignment $s'$. We will show that given two candidates for $s'$, the one with fewer differences compared to $s$ is the preferred candidate due to the formula size condition 3 of the problem. This shows that the optimal output defines an assignment $s'$ with a minimum number of differences compared to $s$.
    
    Let $s_1, s_2$ be $\tau$-assignments with $s_i = s \triangle \lambda_i$. We assume $|\lambda_1| < |\lambda_2|$. Let $\theta_2, \theta'_2, \chi_2$ be the optimal output formulas corresponding to $s_2$. For every $\ell \in \lambda$ we have $s \vDash \ell$ and $s_2 \vDash \overline{\ell}$. Moreover, we have $\theta_2 \land \chi_2 \vDash s \vDash \ell$ and $\theta'_2 \land \chi_2 \vDash s_2 \vDash \overline{\ell}$. Since all output formulas are in \(\mathrm{CNF}\), inferring a literal requires a positive occurrence of that literal. In other words, for every $\ell \in \lambda_2$, both $\ell$ and $\overline{\ell}$ must occur somewhere in $\theta_2$, $\theta'_2$ or $\chi_2$. On the other hand, literals in $s \setminus \lambda$ must occur at least once. Thus the total size of $\theta_2$, $\theta'_2$ and $\chi_2$ must be at least $2|\lambda_2|+(|\tau|-|\lambda_2|) = |\tau| + |\lambda_2|$.

    For $s_1$, consider the formulas $\theta_1 = \bigwedge \lambda_1$, $\theta'_1 = \bigwedge \overline{\lambda_1}$ and $\chi_1 = \bigwedge (\tau \setminus \lambda_1)$. Clearly these formulas define the assignments $s$ and $s_1$ as desired. Furthermore, for the total size we have $\size(\theta_1) + \size(\theta'_1) + \size(\chi) = |\tau|+ |\lambda_1| < |\tau|+ |\lambda_2| \leq \size(\theta_2) + \size(\theta'_2) + \size(\chi)$. The claim follows. 
\end{proof}

\subsection{Computational Complexity}\label{subsec:complexity}

We proceed to study the computational complexity of our problems for propositional logic. As per usual, we study the complexity of the related decision problems: instead of finding a minimal solution to the given problem, we want to decide whether there is a solution of size at most \(k\), where \(k\) is part of the input.

We start with global contrastive explainability problem. Recall that in the propositional setting we require the output formulas to be CNF-formulas.

\begin{theorem}
    The global contrastive explanation problem is \(\Sigma_2^p\)-complete.
\end{theorem}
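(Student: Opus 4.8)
The plan is to establish membership in $\Sigma_2^p$ and hardness separately. For membership, I would phrase the decision problem as: given $(\varphi, \psi)$ and a bound $k$, decide whether there exists an output triple $(\theta, \theta', \chi)$ in CNF satisfying condition 1 (the two equivalences) with $\size(\theta) + \size(\theta') + \size(\chi) \le k$. An existential guess of the three CNF-formulas has polynomial size, since each formula can be assumed to have total size at most $k$ (clauses containing a complementary pair of literals or repeated literals can be discarded or simplified without affecting equivalence). The verification step — checking $\theta \land \chi \equiv \varphi \land \neg \psi$ and $\theta' \land \chi \equiv \neg \varphi \land \psi$ — is a pair of propositional equivalence tests, each of which is decidable with a single call to a $\coNP$ (equivalently $\NP$) oracle. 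Hence the whole problem sits in $\exists \cdot \coNP = \Sigma_2^p$. One subtlety I would address explicitly: the secondary optimization criterion (maximize $\size(\chi)$) does not change the complexity class of the decision version, since the decision problem I am reducing to only asks about the existence of a solution of total size at most $k$; I would note that the optimization structure is standard and the decision problem formulation handles it.

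For hardness, I would reduce from a known $\Sigma_2^p$-complete problem. The natural candidate is the problem of deciding, given a quantified Boolean formula $\exists \vec{x} \forall \vec{y}\, \Phi(\vec{x}, \vec{y})$, whether it is true, or more conveniently a minimization-flavored $\Sigma_2^p$-complete problem such as \textsc{Shortest Implicant} or the problem of deciding whether a DNF/CNF formula has an equivalent formula of size at most $k$ (the "minimum equivalent expression" style problems, which are $\Sigma_2^p$-complete). Since our output is required to satisfy an exact equivalence $\theta \land \chi \equiv \varphi \land \neg \psi$, the cleanest route is to reduce from a problem that already asks for a minimum-size CNF equivalent to a given formula. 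I would engineer the input $(\varphi, \psi)$ so that the global contrastive structure degenerates into a single minimum-CNF question: for instance, by choosing $\psi$ so that $\neg \varphi \land \psi$ forces $\chi$ and $\theta'$ into a controlled shape, the minimality of $\size(\theta) + \size(\theta') + \size(\chi)$ becomes equivalent to finding a minimum-size CNF representation of a target function derived from $\varphi$.

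The main obstacle I anticipate is the hardness direction, specifically controlling the interaction between the three output components under the two simultaneous equivalence constraints together with the secondary maximization of $\size(\chi)$. A naive reduction might allow the solver to "cheat" by pushing content between $\theta$, $\theta'$, and $\chi$ in ways that collapse or inflate sizes unexpectedly, so I would need to design the instance so that the shared formula $\chi$ is essentially pinned down (or forced to be trivial, e.g. $\chi \equiv \top$) and the only remaining freedom is the size of $\theta$, which should track exactly the quantity being minimized in the source $\Sigma_2^p$-complete problem. A convenient simplification would be to pick $\varphi, \psi$ with disjoint or near-disjoint model sets so that no nontrivial likeness $\chi$ can exist, thereby isolating the minimization to $\theta$ (and symmetrically $\theta'$). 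I would also double-check that the instances produced are consistent, i.e. that $\varphi \land \neg \psi$ and $\neg \varphi \land \psi$ are both satisfiable so that a valid output exists whenever the source instance is a yes-instance.

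Finally, I would confirm that the reduction is polynomial-time computable and that the size bound $k$ in the target instance can be set from the source instance's parameter, completing both directions and yielding $\Sigma_2^p$-completeness.
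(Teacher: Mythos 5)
Your plan matches the paper's proof in both halves: membership via guess-and-check with a $\coNP$ oracle for the two equivalences (with the secondary criterion dropped in the decision version, exactly as you note), and hardness by reduction from the $\Sigma_2^p$-complete minimum CNF formula size problem of Umans \cite{UMANS2001597}. The one genuine missing piece is the concrete instance, and the paper's choice shows that the obstacle you anticipate dissolves rather than needing to be engineered around. The paper maps $(\varphi,k)$ to the instance $(\bot,\varphi,k)$: since $\bot \land \neg\varphi \equiv \bot$, one can take $\theta = \bot$ (size $0$), and the only surviving constraint from condition 1 is $\theta' \land \chi \equiv \neg\bot \land \varphi \equiv \varphi$; taking $\theta'$ to be the empty conjunction, a CNF for $\varphi$ of size at most $k$ yields a witness of total size at most $k$, and conversely any witness of total size at most $k$ hands you the CNF formula $\theta' \land \chi$ of size at most $k$ equivalent to $\varphi$. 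Your worry that the solver could ``cheat'' by shifting clauses between $\theta'$ and $\chi$, so that $\chi$ must be pinned down or forced to $\top$, is moot here: only the total size is counted, and since $\theta' \land \chi$ is itself a CNF formula, every distribution of clauses between the two components encodes the same minimum-CNF question. Relatedly, your suggested simplification of making the model sets of $\varphi$ and $\psi$ disjoint would not by itself exclude nontrivial likenesses (disjoint model sets still admit common consequences $\chi$); making one side outright unsatisfiable, as the paper does, is the cleaner degeneration and also trivially guarantees the consistency check you flag for the other side. With the $(\bot,\varphi,k)$ gadget filled in, your outline becomes exactly the paper's argument.
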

\begin{proof}
    We will give a simple reduction from the minimal formula size problem for CNF-formulas, which was proved to be \(\Sigma_2^p\)-complete in \cite{UMANS2001597}. Let \((\varphi,k)\) be an input to the latter problem. Consider the following input \((\bot,\varphi,k)\) to the global contrastive explainability problem. If \((\theta,\theta',\chi)\) is a witness for this input, then \(\theta \equiv \bot\) and \(\theta' \land \chi \equiv \varphi\). It is now easy to see that the output of \((\varphi,k)\) is \texttt{yes} iff the output of \((\bot,\varphi,k)\) is \texttt{yes}. 
\end{proof}

For the next hardness result we use a result from \cite{explainability1} which states that the so-called \textbf{local explainability problem} for \(\mathrm{PL}\) is \(\Sigma_2^p\)-complete. In this problem the input is a tuple \((s,\varphi,k)\), where \(s\) is an assignment, such that \(s \models \varphi\) and the goal is to determine whether there exist a formula \(\psi\) of \(\mathrm{PL}\) such that \(\mathrm{size}(\psi) \leq k\) and \(s \models \psi \models \varphi\). 
The following result demonstrates that this problem is a special case of the contrastive explainability problem and the separability problem.

\begin{theorem}
    Both the contrastive explanation problem and the minimal separator problem for \(\mathrm{PL}\) are \(\Sigma_2^p\)-complete.
\end{theorem}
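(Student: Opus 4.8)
The plan is to prove $\Sigma_2^p$-completeness of each problem by establishing membership uniformly via a guess-and-check argument, and then obtaining hardness by exhibiting the local explainability problem as a direct specialization of each (exactly the ``special case'' framing), so that the already-cited $\Sigma_2^p$-hardness transfers.

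For membership I would write both decision problems in the $\exists\forall$ shape characteristic of $\Sigma_2^p$. For the minimal separator problem with input $(\varphi,\psi,k)$, nondeterministically guess a CNF formula $\theta$ with $\size(\theta)\le k$ (hence of polynomial size) and then verify $\varphi\vDash\theta$ and $\psi\vDash\neg\theta$. Each entailment is the complement of a satisfiability test, so a $\coNP$ predicate, and their conjunction is again $\coNP$; the outer existential guess over $\theta$ therefore places the problem in $\Sigma_2^p$. The contrastive explanation problem with input $(S,S',\varphi,\psi,k)$ is handled identically: guess CNF formulas $\theta,\theta',\chi$ with $\size(\theta)+\size(\theta')+\size(\chi)\le k$ and check the four entailments of condition~1 of Definition~\ref{def:twoinputproblem}, each a $\coNP$ test over all assignments. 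This yields the $\Sigma_2^p$ upper bound for both.

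For minimal separator hardness I would observe that the local explainability instance $(s,\varphi,k)$ is literally the separator instance $(s,\neg\varphi,k)$. Indeed, a separator $\theta$ here must satisfy $s\vDash\theta$ and $\neg\varphi\vDash\neg\theta$, and the latter is equivalent to $\theta\vDash\varphi$; together these say exactly $s\vDash\theta\vDash\varphi$ with $\size(\theta)\le k$, the defining condition of local explainability. Since $s$ is satisfiable and $s\vDash\varphi$ makes $s$ itself a separator, no $\mathtt{error}$ arises and the thresholds coincide, so the instances have the same answer. For the contrastive problem the strategy is to trivialize one side so the triple collapses to a single formula: given $(s,\varphi,k)$ I would form the instance $(\{s\},\{\bot\},\varphi,\bot,k)$. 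Condition~1 becomes $s\vDash\theta\land\chi\vDash\varphi$ (as $\varphi\land\neg\bot\equiv\varphi$) together with $\bot\vDash\theta'\land\chi$ and $\theta'\land\chi\vDash\bot$; the latter pair only demands that $\theta'\land\chi$ be unsatisfiable, achievable at zero cost by $\theta'=\bot$. Using additivity of $\size$ over $\land$ and the freedom to distribute the clauses of a CNF between $\theta$ and $\chi$ (with $\chi=\top$ allowed), the minimal total size equals the minimal CNF size of a $\psi$ with $s\vDash\psi\vDash\varphi$, so with threshold $k$ the contrastive instance is a yes-instance iff the local explainability instance is.

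The hard part will be the bookkeeping that makes each embedding \emph{exact} rather than one-directional. For the contrastive reduction one must argue that the existentially free components cannot be exploited to beat the single-formula bound: $\theta\land\chi$ is always itself an admissible local-explainability witness, so its size is bounded below by the optimum, while $\size(\theta')\ge 0$; conversely any local-explainability witness is realized by placing all its clauses in $\theta$ with $\chi=\top$ and $\theta'=\bot$. The most delicate point I expect is reconciling the formula class and $\size$ conventions between the local explainability problem and the CNF-restricted outputs required here, so that the two minima genuinely coincide; once this is verified, $\Sigma_2^p$-completeness for both problems follows by combining membership with the two specializations.
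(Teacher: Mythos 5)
Your proposal follows essentially the same route as the paper: membership by guessing the CNF output formulas within the size bound and verifying each entailment with a $\coNP$ check, hardness for the minimal separator problem via the instance $(\varphi_s,\neg\varphi,k)$ (where $\varphi_s$ is the conjunction of literals defining $s$), and hardness for the contrastive problem via the instance $(\{\varphi_s\},\{\bot\},\varphi,\bot,k)$, where taking $\theta'=\bot$ at cost $\size(\bot)=0$ trivializes the second side, so that the optimum of $\size(\theta)+\size(\theta')+\size(\chi)$ collapses to the minimal CNF size of a $\psi$ with $s\vDash\psi\vDash\varphi$. These are exactly the reductions in the paper's proof.

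However, the point you explicitly defer --- ``reconciling the formula class and $\size$ conventions \dots so that the two minima genuinely coincide'' --- is not mere bookkeeping; it is the one substantive step of the hardness argument, and your proof is incomplete without it. The local explainability problem of \cite{explainability1} quantifies over \emph{arbitrary} formulas of $\mathrm{PL}$, while the outputs of your target problems are CNF-restricted, so your reductions as written are only one-directional: a CNF witness within the bound gives a local-explainability witness trivially, but a yes-instance of local explainability does not yet yield a CNF triple (or CNF separator) within the same bound $k$. The paper closes exactly this direction by citing a result from \cite{explainability1}: if some formula $\psi$ with $\size(\psi)\leq k$ satisfies $s\models\psi\models\varphi$, then some \emph{conjunction of literals} (hence a CNF formula) of size at most $k$ does as well. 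If you prefer not to cite it, the lemma is short to prove since $s$ is a total assignment: let $\lambda$ be the conjunction of the literals made true by $s$ over the variables occurring in $\psi$; then $\size(\lambda)\leq\size(\psi)$, $s\models\lambda$, and every assignment satisfying $\lambda$ agrees with $s$ on all variables of $\psi$, so $\lambda\models\psi\models\varphi$. Note that the same lemma is needed in your separator reduction as well, for the identical reason. With this step inserted, your argument is correct and coincides with the paper's proof.
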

\begin{proof}
    Upper bound is clear. For the lower bounds, consider an instance \((s,\varphi,k)\) of the local explainability problem. Let \(\varphi_s\) be a conjunction of literals which defines \(s\). For contrastive explainability problem the hardness follows from the observation that \((\varphi_s,\bot,\varphi,\bot,k)\) is an instance of the contrastive explainability problem for which the output is \texttt{yes} iff the output of \((s,\varphi,k,1)\) is \texttt{yes}. In the right-to-left direction we use a result from \cite{explainability1} which shows that if there exist a formula \(\psi\) with \(\size(\psi) \leq k\) and \(s \models \psi \models \varphi\), then there is also one which is a conjunction of literals. For separability problem we get the hardness by considering instances of the form \((\varphi_s,\neg \varphi,k)\). 
\end{proof}

We move on to consider counterfactual contrastive and difference problems. Theorem \ref{thm:partial-partial} shows that if the input $S$ defines a partial assignment, then the formulas $\theta \land \chi$ and $\theta' \land \chi$ from the output $(\theta, \theta', \chi)$ also define partial assignments. Thus it is natural to define the \textbf{simplified counterfactual contrastive explanation problem} and the \textbf{simplified counterfactual difference problem} by modifying  Definitions \ref{def:problem:one_input_consequence} and \ref{def:problem:one_input_equivalence} as follows. First, we require that $S$ defines a partial assignment. Secondly, the output formulas $\theta$, $\theta'$ and $\chi$ are required to be partial assignments. 

Our next result shows that already the simplified problems are $\Sigma^p_2$ complete. We leave the complexity of the non-simplified problems for future work. 

\begin{theorem}\label{thm:simplified-complexity}
The simplified counterfactual contrastive explanation problem and the simplified counterfactual difference problem are \(\Sigma_2^p\)-complete.
\end{theorem}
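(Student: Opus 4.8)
The plan is to prove both directions: membership in \(\Sigma_2^p\) and \(\Sigma_2^p\)-hardness, with a single argument covering both simplified problems. Recall that in the decision version we fix a bound \(k\) and ask whether there is a \emph{feasible} triple \((\theta,\theta',\chi)\) (i.e.\ satisfying conditions 1 and 2 of the respective definition) with \(\size(\theta)+\size(\theta')+\size(\chi)\le k\); the optimization criteria 3 and 4 are absorbed into the bound \(k\).

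For membership I would use the characterization \(\Sigma_2^p=\exists\cdot\coNP\). Since the simplified problems require \(\theta,\theta',\chi\) to be partial assignments, each is a consistent set of at most \(|\tau|\) literals and hence of polynomial size, so the whole triple can be guessed existentially. Verifying feasibility then reduces to: testing \(S\equiv\theta\land\chi\) (difference) or \(S\vDash\theta\land\chi\) (contrastive), which for conjunctions of literals is a polynomial-time check on literal sets; testing the two consequences \(\theta\land\chi\vDash\varphi\land\neg\psi\) and \(\theta'\land\chi\vDash\neg\varphi\land\psi\), each a single \(\coNP\) validity query; and testing the satisfiability condition, which for terms is again polynomial. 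As the verification is a conjunction of polynomial-time and \(\coNP\) predicates, the problem lies in \(\exists\cdot\coNP=\Sigma_2^p\).

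For hardness I would reduce from the \emph{shortest implicant} problem for DNF, shown \(\Sigma_2^p\)-complete in \cite{UMANS2001597}: given a DNF formula \(\beta\) over a vocabulary \(\tau_0\) and a bound \(k\), decide whether \(\beta\) has a satisfiable implicant (term entailing \(\beta\)) of size at most \(k\). Introducing one fresh proposition \(d\), construct the input \(S=\{d\}\), \(\varphi=d\), \(\psi=\neg d\land\beta\) with bound \(k+2\). A short computation gives \(\varphi\land\neg\psi\equiv d\) and \(\neg\varphi\land\psi\equiv\neg d\land\beta\), so \(S\) is a satisfiable partial assignment with \(S\vDash\varphi\land\neg\psi\). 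The key structural claim is that every feasible output has the shape \(\theta=\{d\}\), \(\chi\equiv\top\), with \(\theta'\) a satisfiable implicant of \(\neg d\land\beta\): condition 1 pins \(\theta\land\chi\) to \(\{d\}\) (via \(S\vDash\theta\land\chi\vDash d\) in the contrastive case and \(S\equiv\theta\land\chi\) in the difference case), while placing \(d\) into \(\chi\) would force \(\theta'\land\chi\) to entail both \(d\) and \(\neg d\), violating the satisfiability requirement of condition 2; hence \(d\in\theta\) and \(\chi\equiv\top\). Since the shortest satisfiable implicant of \(\neg d\land\beta\) has size \(1+\mathrm{si}(\beta)\), where \(\mathrm{si}(\beta)\) denotes the shortest implicant size of \(\beta\), the minimum feasible total size is \(2+\mathrm{si}(\beta)\); thus the constructed instance is a yes-instance at bound \(k+2\) exactly when \(\beta\) has an implicant of size at most \(k\). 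This one reduction applies verbatim to both simplified problems.

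The main obstacle is the structural analysis in the hardness proof, namely showing that \emph{every} feasible triple (not just an optimal one) is forced into the claimed shape, so that the minimum total size is exactly \(2+\mathrm{si}(\beta)\); in particular one must argue that the fresh variable \(d\) cannot migrate into \(\chi\) and that no superfluous literals survive in \(\theta\) or \(\chi\), using the interplay between condition 1 and the satisfiability condition 2. The remaining points, including the membership argument and the degenerate case where \(\beta\) is unsatisfiable (which yields no feasible triple, an \(\mathtt{error}\) output, and a no-instance, matching the shortest-implicant side), are routine.
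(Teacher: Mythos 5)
Your proposal is correct, but it proves hardness by a genuinely different route than the paper. The paper's proof reduces from \(\Sigma_2\mathrm{SAT}\), adapting the machinery of Theorem 6 of \cite{explainability1}: it introduces dual variables \(p_i^d\), sets \(\varphi := \bigwedge_i (p_i \lor p_i^d) \land \xi\), \(\psi := \bigwedge_i(\neg p_i \land \neg p_i^d)\) and \(s := \{\neg p_i, \neg p_i^d \mid i \leq n\}\), and asks for a solution of size \(3n\); the structural forcing there is a counting argument (the equivalence \(\theta \land \chi \equiv s\) consumes exactly \(2n\) literals, leaving \(\theta'\) a budget of \(n\), which forces exactly one of \(p_i, p_i^d\) per index and hence a flip set \(\lambda\) witnessing the outer existential quantifiers). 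You instead reduce from Umans' shortest-implicant problem for DNF \cite{UMANS2001597}, using a single fresh variable \(d\) with \(S=\{d\}\), \(\varphi = d\), \(\psi = \neg d \land \beta\); your structural forcing comes from the interplay of condition 1 with the satisfiability requirement of condition 2, which correctly prevents \(d\) from migrating into \(\chi\) and pins \(\theta\land\chi\) to \(\{d\}\) in both the contrastive (\(S \vDash \theta\land\chi \vDash d\) over a one-literal \(S\)) and difference (\(S \equiv \theta\land\chi\)) variants, so that \(\theta'\) is exactly \(\neg d\) together with a satisfiable implicant of \(\beta\) (freshness of \(d\) lets you strip \(\neg d\) without losing \(\theta' \vDash \beta\)). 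Your reduction is shorter and more modular, leaning on an off-the-shelf \(\Sigma_2^p\)-complete problem already cited in the paper; the paper's reduction is self-contained relative to its own prior toolkit and simultaneously re-derives the flip-set correspondence that mirrors Theorem \ref{thm:assignment-correspondence}, which is why the authors chose it. Two small points to tighten: in the membership argument, checking \(S \equiv \theta \land \chi\) is a pair of \(\coNP\) entailment queries rather than a polynomial-time literal-set comparison, since the simplified problems only require \(S\) to \emph{define} a partial assignment, not to be one syntactically (this does not affect the \(\exists\cdot\coNP\) upper bound); and you should say explicitly that implicants in the Umans problem are satisfiable terms, since an inconsistent \(\theta'\) would otherwise trivially entail \(\neg\varphi\land\psi\) — your appeal to condition 2, plus the fact that partial assignments are consistent by definition, already closes this, but it deserves a sentence.
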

\begin{proof}
    Upper bounds are clear. For the lower bounds we will adapt the proof of Theorem 6 in \cite{explainability1}. That is, we will give a reductions from \(\Sigma_2 \mathrm{SAT}\), which is well-known to be \(\Sigma_2^p\)-complete.  Consider an instance
    \[\exists p_1 \dots \exists p_n \forall q_1 \dots \forall q_m \xi(p_1,\dots,p_n,q_1,\dots,q_m)\]
    of \(\Sigma_2 \mathrm{SAT}\). We start by introducing for every existentially quantified Boolean variable \(p_i\) a new proposition symbol \(p^d_i\) which intuitively speaking represents the negation of \(p_i\). Denoting \(\xi(p_1,\dots,p_n,q_1,\dots,q_m)\) simply by \(\xi\), we define 
    \[\varphi\ :=\ \bigwedge_{i=1}^n (p_i \lor p^d_i) \land \xi\]
    Let
    \[s := \{\neg p_i \mid 1 \leq i \leq n\} \cup \{\neg p^d_i \mid 1 \leq i \leq n\}.\]
    Recall the notation $s \triangle \lambda = (s \setminus \lambda) \cup \{\overline{\ell} \mid \ell \in \lambda\}$, where $s$ is a set of literals and $\lambda \subseteq s$. As part of the proof of Theorem 6 in \cite{explainability1} it was essentially established that there exists a set \(\lambda\) of literals of size \(n\) such that \(s \Delta \lambda \models \varphi\) iff the original instance of \(\Sigma_2 \mathrm{SAT}\) is true.
    
    Let 
    \[\psi := \bigwedge_{i = 1}^n (\neg p_i \land \neg p^d_i).\]
    Note that \(s \equiv \psi \models \neg \varphi\). Consider now the tuple \((s,\psi, \neg \varphi, 3n)\). We will show that for both problems the output of this instance is \texttt{yes} iff there exists a set \(\lambda\) of literals of size \(n\) such that \(s \Delta \lambda \models \varphi\).
    
    Suppose first that there exists a set \(\lambda\) of literals of size \(n\) such that \(s \Delta \lambda \models \varphi\). Pick \(\theta, \theta', \chi\) in such a way that
    \[\theta \land \chi := \psi\]
    and
    \[\theta' \land \chi := \bigwedge_{p \in \lambda} p \land \chi.\]
    Now
    \[\mathrm{size}(\theta) + \mathrm{size}(\theta') + \mathrm{size}(\chi) = 3n\]
    and \(\theta \land \chi \equiv s\).
    
    Suppose then that there exists a pair \((\theta,\theta')\) and \(\chi\) such that the following conditions hold.
    \begin{itemize}
        \item \(s \models \theta \land \chi \models \psi \land \neg \varphi\) and \(\theta' \land \chi \models \neg \psi \land \varphi\).
        \item \(\mathrm{size}(\theta) + \mathrm{size}(\theta') + \mathrm{size}(\chi) \leq 3n\).
        \item \(\theta,\theta'\) and \(\chi\) are conjunctions of literals.
    \end{itemize}
    First observation that we make is that since \(s \models \theta \land \chi \models \psi\), we must have that \(\theta \land \chi \equiv \psi\) (i.e., \(\theta \land \chi \equiv s\)), which means that \(\mathrm{size}(\theta) + \mathrm{size}(\chi) = 2n\). Thus \(\mathrm{size}(\theta') \leq n\). On the other hand, since \(\theta' \land \chi \models \varphi\), \(\theta'\) must contain for each \(1 \leq i \leq n\) either \(p_i\) or \(p^d_i\), because any assignment which makes \(\varphi\) true must map either \(p_i\) or \(p^d_i\) to \(1\). Since \(\mathrm{size}(\theta') \leq n\) , \(\theta'\) must contain for each \(1 \leq i \leq n\) either \(p_i\) or \(p^d_i\) and no other proposition symbols. Letting \(\lambda\) denote the corresponding set of literals of size \(n\), we get that \(s \Delta \lambda \models \varphi\). 
\end{proof}

\section{Implementation and Case Studies}

In this section we outline our Answer Set Programming (ASP) based implementation for selected explanation problems, followed by case studies where we use it to demonstrate how our problems work on real-world instances.

\paragraph{\bf{Implementation}}
Due to the computational complexity of the considered problems, obtaining a polynomial time dedicated algorithm seems unlikely. However, the inherent complexity closely matches the one of ASP~\cite{EiterIK09} and it is thus natural to implement our notions in this formalism to obtain a prototypical implementation.

We thus implemented Definitions \ref{def:globaltwoinputproblem}, \ref{def:problem:one_input_consequence}, and \ref{def:problem:one_input_equivalence} in ASP. The encodings as well as a Python script for reproducing the case studies can be found online at \url{https://github.com/tlyphed/general_contrastive_exp}.
Due to length constraints we cannot cover the ASP encoding in detail but we will give a brief overview.

The idea behind the encodings is to guess the output formulas $\theta, \theta',\chi$ as CNF and check that the required entailments and equivalences hold. Those checks are done using the \emph{saturation technique}~\cite{EiterIK09}, which is an encoding method for expressing $\coNP$ problems in ASP.
The criteria regarding the sizes of the formulas are then formulated using weak constraints, which essentially give preference to formulas which adhere to those criteria.

\paragraph{\bf{Case Studies}}
We used our implementation to compute contrastive explanations for decision trees that were obtained from three real-world classification datasets. The datasets that we used were Iris \cite{iris}, Wine \cite{wine} and Glass \cite{glass}. We used the \texttt{scikit-learn} Python library \cite{scikit-learn} for training the classifiers. For all datasets we used \(80\%\) of the data for training and \(20\%\) for testing. To keep the learning task simple, we fine-tuned only the depth of the decision trees, selecting the smallest depth that achieved the highest accuracy on the test data.

Having trained the decision trees, we proceeded as follows. For each dataset we picked two classes \(c,c'\) at random and then used the learned decision tree \(\mathcal{T}\) to form class formulas for \(c,c'\). The propositional symbols in these class formulas correspond to pivots used by \(\mathcal{T}\). We also picked a random instance from the test data that was classified as \(c\) by \(\mathcal{T}\). We then ran our implementations for global contrastive explanation problem (GCE), counterfactual contrastive explanation problem (CCE) and counterfactual difference problem (CD) using the class formulas and the instance as inputs. For CCE and CD, we additionally constrained the output formulas to be partial assignments. This choice is motivated by Theorems \ref{thm:partial-partial} and \ref{thm:assignment-assignment}, since in our case studies \(S\) (the instance) corresponds to a conjunction of literals. The results are summarized in Table \ref{tab:simple-table}.

\begin{table}[!ht]
\centering
\caption{Summary of our case studies. Note that we have Booleanized the instances using the pivots learned by the corresponding decision tree.}
\scalebox{0.85}{ 
\begin{tabular}{cccc}
\toprule
\textbf{Case} & \textbf{GCE} & \textbf{CCE} & \textbf{CD} \\
\midrule
\makecell[l]{dataset: Iris \\ depth: \(4\) \\ instance: \\ \(\{\neg p_1, p_2, p_3, p_4\}\)} & \makecell[l]{\(\theta : (\neg p_2 \lor p_3) \land (p_2 \lor p_4)\) \\ \(\theta' : (\neg p_2 \lor \neg p_3) \land (p_2 \lor \neg p_4)\) \\ \(\chi : \neg p_1\)} & \makecell[l]{\(\theta : p_3\) \\ \(\theta' : \neg p_3\) \\ \(\chi : \neg p_1 \land p_2\)} & \makecell[l]{\(\theta : p_3\) \\ \(\theta' : \neg p_3\) \\ \(\chi : \neg p_1 \land p_2 \land p_4\)} \\
\hline
\makecell[l]{dataset: Wine \\ depth: \(3\) \\ instance: \\ \(\{p_1 , p_2 , p_3 , \neg p_4, p_5\}\)} & \makecell[l]{\(\theta : (p_1 \lor \neg p_4) \lor (p_1 \lor p_5)\) \\ \(\theta' : \neg p_1 \land p_4\) \\ \(\chi : (\neg p_1 \lor p_3) \land (\neg p_1 \lor p_2)\)} & \makecell[l]{\(\theta : p_1\) \\ \(\theta' : \neg p_1 \land p_4\) \\ \(\chi : p_2 \land p_3\)} & \makecell[l]{\(\theta : p_1 \land \neg p_4\) \\ \(\theta' : \neg p_1 \land p_4\) \\ \(\chi : p_2 \land p_3 \land p_5\)} \\
\hline
\makecell[l]{dataset: Glass \\ depth: \(3\) \\ instance: \\ 
\(\{p_1 , p_2 , \neg p_3 , \neg p_4, \) \\ \(\neg p_5 , p_6 , p_7\}\)} 
& \makecell[l]{\(\theta : (\neg p_1 \lor p_2)  \land  (p_1 \lor \neg p_5)\) \\ \(\land  (\neg p_1 \lor \neg p_3)  \land  (p_1 \lor \neg p_7)\) \\ \(\theta' : (p_1 \lor p_5)  \land  (p_1 \lor \neg p_6)\) \\ \(\land  (\neg p_1 \lor p_4)  \land (\neg p_1 \lor \neg p_2)\) \\ \(\chi : \top\)} & \makecell[l]{\(\theta : p_2\) \\ \(\theta' : \neg p_2 \land p_4\) \\ \(\chi : p_1 \land \neg p_3\)} & \makecell[l]{\(\theta : p_2 \land \neg p_4\) \\ \(\theta' : \neg p_2 \land p_4\) \\ \(\chi : p_1 \land \neg p_3 \land \neg p_5\) \\ \(\land p_6 \land p_7\)} \\
\bottomrule
\end{tabular}
}
\smallskip
\label{tab:simple-table}
\end{table}
We make some general remarks about Table \ref{tab:simple-table}. For GCE we see both simple and more complex likenesses. For example, in the case of Glass, the optimal output had no common clauses for the two class formulas. In each case we see that CCE and CD have chosen the same propositions to flip, although we know by Example \ref{example:sea-bird-example} that this is not necessary. The CCE outputs are more informative here since they also leave out unnecessary propositions from the explanations.

We examine the formulas in more detail for the Iris dataset. Each instance is an iris flower and the goal is to classify them into one of three species: setosa, versicolor, or virginica. We formed class formulas for versicolor and virginica. From GCE we see e.g. that the class formulas have \(\neg p_1\) in common, where 
\[p_1 := \text{``petal length is \(\leq 2.45\)cm''}.\]
So the class formulas for versicolor and virginica both agree that petal length should be more than \(2.45\)cm. From CCE and CD, we see that the instance \(\{\neg p_1 , p_2 , p_3 , p_4\}\) was classified as versicolor, because \(p_3 \land \neg p_1 \land p_2\), where 
\[p_2 := \text{``petal length is \(\leq 4.75\)cm''}\]
and 
\[p_3 := \text{``petal width is \(\leq 1.65\)cm''}.\]
So the explanation is that the flower has petal length between \(2.45\)cm and \(4.75\)cm and petal width at most \(1.65\)cm. If the petal length had been more than \(4.75\)cm, the flower would have been classified as virginica.

\section{Conclusion}

In this work, we introduced a logic-based framework for contrastive explanations that formalizes various questions of the form ``Why \(P\) but not \(Q\)?''. Our framework encompasses both local and global contrastive explanations. We examined the theoretical properties of our definitions in detail in the important special case of propositional logic. Among other results, we showed that our framework subsumes a cardinality-minimal version of existing contrastive explanation approaches from \cite{darwiche23,ignatiev20}. In addition, we analyzed the computational complexity of the proposed problems and proved that, in the propositional setting, most of them are \(\Sigma_2^p\)-complete. Finally, we implemented our approach using Answer Set Programming
and demonstrated that our problems produce useful explanations on real-world
instances.

\medskip

\noindent \textbf{Acknowledgments.} Antti Kuusisto and Miikka Vilander received funding from the Research Council of Finland projects \emph{Explaining AI via Logic} (XAILOG) 345612 and \emph{Theory of computational logics} 324435, 328987, 352419, 352420. 
Tobias Geibinger is a recipient of a DOC Fellowship of the Austrian Academy of Sciences at the Institute of Logic and Computation at the TU Wien.
This work has benefited from the European Union's Horizon 2020 research and innovation programme under grant agreement No 101034440 (LogiCS@TUWien).\raisebox{-0.15cm}{\includegraphics[width=0.7cm]{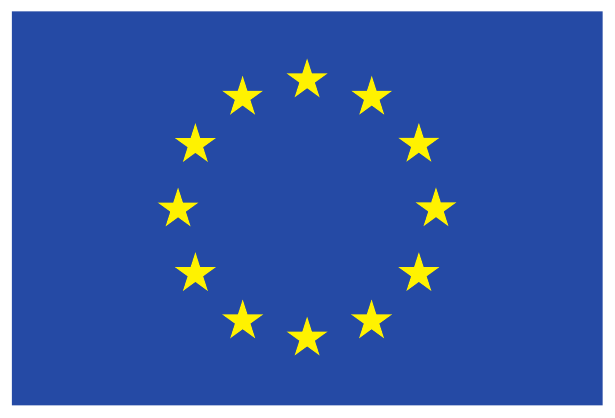}} In addition, this research was funded in part by the Austrian Science Fund (FWF) [10.55776/COE12].
The authors acknowledge TU Wien Bibliothek for financial support through its Open Access Funding Programme. 
The author names of this article are ordered based on alphabetical order.

\bibliographystyle{plainurl}
\bibliography{arxiv}

\end{document}